\definecolor{grey}{rgb}{0.9,0.9,0.9}
\def\eqref#1{equation~\ref{#1}}
\def\1{\bm{1}}
\DeclareMathAlphabet{\mathsfit}{\encodingdefault}{\sfdefault}{m}{sl}
\SetMathAlphabet{\mathsfit}{bold}{\encodingdefault}{\sfdefault}{bx}{n}
\def\gA{{\mathcal{A}}}
\def\gM{{\mathcal{M}}}
\def\gP{{\mathcal{P}}}
\def\gS{{\mathcal{S}}}
\def\gT{{\mathcal{T}}}
\newcommand{\E}{\mathbb{E}}
\DeclareMathOperator*{\argmax}{arg\,max}
\DeclareMathOperator*{\argmin}{arg\,min}
\DeclareRobustCommand\onedot{\futurelet\@let@token\@onedot}
\def\@onedot{\ifx\@let@token.\else.\null\fi\xspace}
\def\eg{\emph{e.g}\onedot} 
\def\ie{\emph{i.e}\onedot}
\def\model{TRAVL }
\begin{document}
\pagestyle{headings}
\mainmatter
\def\ECCVSubNumber{7315}

\title{Rethinking Closed-loop Training for Autonomous Driving}

\begin{comment}
\titlerunning{ECCV-22 submission ID \ECCVSubNumber}
\authorrunning{ECCV-22 submission ID \ECCVSubNumber}
\author{Anonymous ECCV submission}
\institute{Paper ID \ECCVSubNumber}
\end{comment}
\titlerunning{Rethinking Closed-loop Training for Autonomous Driving}

\author{Chris Zhang\textsuperscript{$\star$}\inst{1,2}\and
  Runsheng Guo\textsuperscript{$\star\dagger$}\inst{3}\and
  Wenyuan Zeng\textsuperscript{$\star$}\inst{1,2}\and\\
  Yuwen Xiong\inst{1,2}\and
  Binbin Dai\inst{1}\and
  Rui Hu\inst{1}\and
  Mengye Ren\textsuperscript{$\dagger$}\inst{4}\and
  Raquel Urtasun\inst{1,2}
}
\authorrunning{C. Zhang, et al.}
\institute{\mbox{Waabi
    \quad\and University of Toronto}
  \mbox{\and University of Waterloo
    \quad\and New York University}\\
  \email{\{czhang,wzeng,yxiong,bdai,rhu,urtasun\}@waabi.ai}\\
  \email{r9guo@uwaterloo.ca@uwaterloo.ca} \quad \email{mengye@nyu.edu}
}
\maketitle
\newcommand\blfootnote[1]{%
  \begingroup
  \renewcommand\thefootnote{}\footnote{#1}%
  \addtocounter{footnote}{-1}%
  \endgroup
}
\blfootnote{\textsuperscript{$\star$} Denotes equal contribution. \\\textsuperscript{$\dagger$} Work done during affiliation with Waabi.}
\newcommand{\norm}[1]{\left\lVert#1\right\rVert}

\begin{abstract}
  Recent advances in high-fidelity
  simulators~\cite{dosovitskiy2017carla,zhou2020smarts,manivasagam2020lidarsim}
  have enabled closed-loop training of autonomous driving agents, potentially
  solving the distribution shift in training v.s. deployment and allowing
  training to be scaled both safely and cheaply. However, there is a lack of
  understanding of how to build effective training benchmarks for closed-loop
  training. In this work, we present the first empirical study which analyzes
  the effects of different training benchmark designs on the success of learning
  agents, such as how to design traffic scenarios and scale training
  environments. Furthermore, we show that many popular RL algorithms cannot
  achieve satisfactory performance in the context of autonomous driving, as they
  lack long-term planning and take an extremely long time to train. To address
  these issues, we propose trajectory value learning (TRAVL), an RL-based
  driving agent that performs planning with multistep look-ahead and exploits
  cheaply generated imagined data for efficient learning. Our experiments show
  that \model can learn much faster and produce safer  maneuvers compared to all
  the baselines. For more information, visit the project website: 
  \href[]{https://waabi.ai/research/travl}{https://waabi.ai/research/travl}.
  \keywords{Closed-loop Learning, Autonomous Driving, RL}
\end{abstract}

\section{Introduction}
Self-driving  vehicles require complex decision-making processes that guarantee
safety while maximizing comfort and progress towards the destination. Most
approaches have relied on hand-engineered planners that are built on top of
perception and motion forecasting modules. However, a robust decision process
has proven elusive, failing to handle  the complexity of the real world.

In recent years, several approaches have been proposed, aiming at exploiting
machine  learning to learn to drive. Supervised learning approaches such as
behavior cloning
\cite{muller2005off,bansal2018chauffeurnet,codevilla2018end,codevilla2019exploring,sauer2018conditional,prakash2021multi}
that learn from human demonstrations are amongst the most popular, as large
amounts of driving data can be readily obtained by instrumenting a vehicle to
collect data while a human is driving. However, learning in this open-loop
manner leads to distribution shift between training and
deployment~\cite{ross2011reduction,codevilla2019exploring,de2019causal}, as the
model does not understand the closed-loop effects of its actions when passively
learning from the expert data distribution.
\begin{figure}[t]
  \centering
  \includegraphics[width=\textwidth]{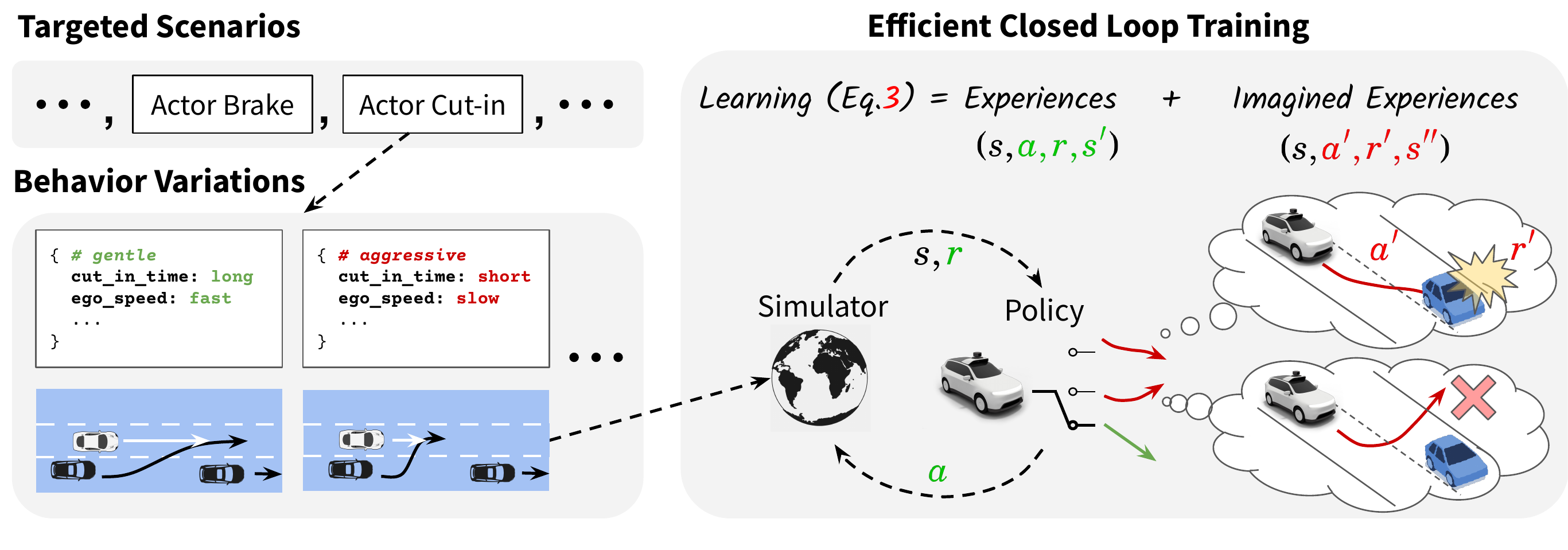}\hfill \caption{We use
    behavioral variations on top of scenarios designed to target specific
    traffic interactions as the basis for learning. \model efficiently learns to
    plan trajectories from both real and imagined experience.
    }\label{fig:teaser}
\end{figure}

Closed-loop training is one principled way to tackle this, by enabling the agent
to continuously interact with the environment and thus learn to recover from its
own mistakes. 
However, closing the loop while driving in the real world  comes
with many safety concerns as  it is dangerous to update the software on the fly
without proper safety verification. Furthermore, it is unethical to expose the
self-driving vehicle (SDV) to safety-critical situations, which is necessary for
learning to handle them. Finally, rare scenarios can take extremely long to
capture in the wild and is impractical to scale. An appealing alternative is to
learn to drive in a virtual environment by exploiting simulation
systems~\cite{dosovitskiy2017carla,zhou2020smarts}. While encouraging results
have been demonstrated
\cite{liang2018cirl,toromanoff2020end,sauer2018conditional}, there is still a
lack of understanding of how to build training benchmarks for effective
closed-loop training.\looseness=-1

In this paper we aim to shed some light on this by studying the following
questions: \emph{What type of scenarios do we need  to learn to drive safely?}
Simple scenarios used in previous works such as car
racing~\cite{wymann2000torcs,li2017infogail,pan2017agile} and
empty-lanes~\cite{kendall2019learning} are  insufficient in capturing the full
complexity of driving. Should we instead simulate complex free-flow
traffic\footnote{ This is similar to how we encounter random events when
collecting data. }\cite{halkias2006ngsim,henaff2019model} or design scenarios
targeting particular traffic situations \cite{dosovitskiy2017carla} that test
specific self-driving capabilities? We have seen in the context of  supervised
learning~\cite{deng2009imagenet,lin2014microsoft,houston2020one} that large
scale data improves generalization of learned models. Is this also the case in
closed-loop?
\emph{How many scenarios do we need?}
What effect does scaling our scenarios have on the quality of the learned
agents? How should we scale the dataset to best improve
performance?\looseness=-1

To better understand these questions, we present (to our knowledge) the first
study that analyzes the effect of different training benchmark designs on the
success of learning neural motion planners. Towards this goal, we developed a
sophisticated  highway driving simulator that can create both realistic
free-flow traffic as well as  targeted scenarios capable of  testing specific
self-driving capabilities. In particular, the latter is achieved by exploiting
procedural modeling which composes variations of unique traffic patterns (\eg,
lead actor breaking, merging in from an on ramp). Since each of these patterns
is parameterized, we can sample diverse variations and generate a large set of
scenarios automatically. Under this benchmark, we show:
\begin{enumerate}
  \item Scenarios designed for specific traffic interactions provide a richer
        learning signal than generic free-flow traffic simulations. This is
        likely because the former ensures the presence of interesting  and
        safety-critical interactions.
  \item
        Training on smaller scenario variations leads to more unsafe driving
        behaviors. This suggests that crafting more variations of traffic
        situations is key when building training benchmarks.
  \item
        Existing RL-based approaches have difficulty learning the intricacies of
        many scenarios. This is likely because they typically learn a direct
        mapping from observations to control signals (\eg, throttle, steering).
        Thus, they regrettably lack multi-step lookahead reasoning into the
        future, which is necessary to handle  complex scenarios such as merging
        into crowded lanes. Furthermore, learning these agents in a model-free
        manner can be extremely slow, as the agents have to learn with trial and
        error through costly simulations.
\end{enumerate}

To address the struggles of current RL approaches, we propose trajectory value
learning (TRAVL), a method which learns \emph{long-term reasoning efficiently in
closed-loop}. Instead of myopically outputting control commands independently at
each timestep, \model can perform decision-making with explicit multistep
look-ahead by \emph{planning} in trajectory space. Our model learns a deep
feature map representation of the state which can be fused with trajectory
features to directly predict the $Q$-value  of following that trajectory.
Inference amounts to selecting the maximum value trajectory plan from a sampled
trajectory set. Unlike conventional model-based planning, this bypasses the need
to explicitly model and predict all state variables and transitions, as not all
of them are equally important (\eg, a far away vehicle is of less interest in
driving). Furthermore, our trajectory-based formulation allows us to cheaply
produce additional \emph{imagined} (\ie, counterfactual) experiences, resulting
in  significantly better learning efficiency compared to model-free methods
which need to rely solely on interacting in the environment.

\subsubsection{Summary of contributions:}
In this paper, we present an in-depth empirical study on how various design
choices of training data generation can affect the success of learning driving
policies in closed-loop. This allows us to  identify a number of guidelines for
building effective closed-loop training benchmarks. We further propose a new
algorithm for efficient and effective learning of long horizon driving policies,
that better handle complex scenarios which mimic the complexity of the
real-world. We believe our work can serve as a starting point to rethink how we
shall conduct closed-loop training for autonomous driving.

\section{Related Work}
\subsubsection{Open-loop training:}
In open-loop training, the agent does not take any actions and instead learns
passively by observing expert states and actions. ALVINN
\cite{pomerleau1988alvinn} first explored behavior cloning as an open-loop
training method for end-to-end visuomotor self-driving. Since then, several
advances in data augmentation~\cite{bansal2018chauffeurnet,codevilla2018end},
neural network architecture
\cite{muller2005off,codevilla2018end,prakash2021multi} and auxiliary task design
\cite{codevilla2019exploring,sauer2018conditional} have been made in order to
handle more complex environments. Additionally, margin-based learning
approaches~\cite{zeng2019end,zeng2020dsdnet,sadat2019jointly,sadat2020perceive}
incorporate structured output spaces, while  offline
RL~\cite{shi2021offline,levine2020offline} exploits reward functions. The
primary challenge in open-loop training is the distribution shift encountered
when the predicted actions are rolled out in closed-loop. While techniques such
as cleverly augmenting the training data \cite{bansal2018chauffeurnet} partially
alleviate this issue, challenges remain.

\subsubsection{Closed-loop Training:}
The most popular paradigm for closed-loop training is reinforcement learning
(RL). In contrast to open-loop learning, online RL approaches
\cite{lillicrap2015continuous,schulman2015trust,schulman2017proximal}
do not require pre-collected expert data, but instead learn through interacting
with the environment. However,  such methods have prohibitively low sample
efficiency~\cite{chen2021learning} and can take several weeks to train a single
model~\cite{toromanoff2020end}. To address this issue, auxiliary tasks have been
used as additional sources of supervision, such as predicting affordances
\cite{chen2015deepdriving,sauer2018conditional,toromanoff2020end}, scene
reconstruction \cite{kendall2019learning} or imitation-based pre-training
\cite{liang2018cirl}. Note that our learning approach is orthogonal to these
tasks and thus can be easily combined. Model-based RL approaches are more sample
efficient. They assume access to a world model, which provides a cheaper way to
generate training data
\cite{sutton1990integrated,feinberg2018model,buckman2018sample}
in addition to the simulator. It can also be used during inference for planning
with multi-step lookaheads
\cite{nagabandi2018neural,hafner2019learning,chua2018deep,silver2018general,schrittwieser2020mastering,oh2017value,finn2017deep,srinivas2018universal}.
Yet, not many works have been explored in the context of self-driving.
\cite{chen2021learning} uses an on-rails world model to generate data for
training and empirically shows better efficiency. \cite{pan2019semantic}  uses a
learned semantic predictor to perform multistep look-ahead during inference. Our
work enjoys both benefits with a unified trajectory-based formulation.

When an expert can be queried online, other closed-loop training techniques such
as DAgger style
approaches~\cite{ross2011reduction,pan2017agile,chen2020learning} can be
applied. When the expert is only available during offline data collection, one
can apply closed-loop imitation learning
methods~\cite{ho2016generative,kuefler2017imitating}.  However, building an
expert can be expensive and difficult, limiting the applications of these
methods.

\subsubsection{Closed-loop Benchmarking:}
Directly closing the loop in the real world \cite{kendall2019learning} provides
the best realism but is unsafe. Thus, leveraging simulation has been a dominant
approach for autonomous driving. Environments focused on simple car
racing~\cite{wymann2000torcs,brockman2016openai} are useful in prototyping
quickly but can be over-simplified for real-world driving applications. More
complex traffic simulators
\cite{lopez2018microscopic,casas2010traffic,balmer2009matsim,ben2010traffic}
typically use heuristic-based actors to simulate general traffic flow.
Learning-based traffic models have also been explored
recently~\cite{bergamini2021simnet,suo2021trafficsim}. However, we  show that
while useful for evaluating the SDV in nominal conditions, general traffic flow
provides limited learning signal as interesting interactions are not guaranteed
to happen frequently. As the majority of traffic accidents can be categorized
into a few number of situations~\cite{najm2007pre}, recent works focus on
crafting traffic scenarios specifically targeting these situations for more
efficient coverage
\cite{shah2018airsim,apollo,dosovitskiy2017carla,ding2022survey,zhong2021survey,riedmaier2020survey}.
However, visual diversity is often more stressed than behavioral diversity. For
example, the CARLA benchmark \cite{dosovitskiy2017carla} has 10 different
scenario types and uses geolocation for variation, resulting in visually diverse
backgrounds but fixed actor policies
\footnote{\url{https://github.com/carla-simulator/scenario_runner}}. Other
approaches include mining real data~\cite{caesar2021nuplan}, or using
adversarial
approaches~\cite{wang2021advsim,koren2019efficient,ding2021multimodal,feng2021intelligent}.
Multi-agent approaches that control different actors with different
policies~\cite{zhou2020smarts,bernhard2020bark} have also been proposed.
However, these works have not studied the effects on learning in detail.

\section{Learning Neural Planners in Closed-loop}
\begin{figure}[t]
	\centering
	\includegraphics[width=\textwidth]{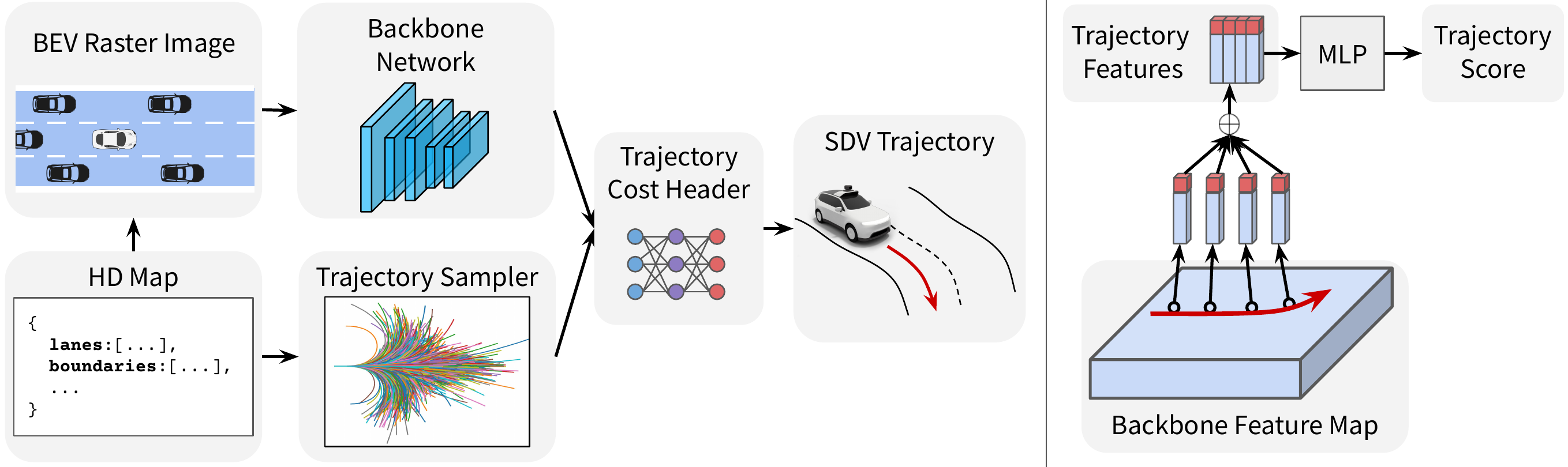}\hfill \caption{\model
		leverages rich backbone features to predict the cost of following a
		trajectory. The lowest costed trajectory is selected as the SDV's
		plan}\label{fig:architecture}
\end{figure}

Most model-free RL-based self-driving approaches parametrize the action space as
instant control signals (e.g., throttle, steering), which are directly predicted
from state observations. While simple, this parameterization hampers the ability
to perform long-term reasoning into the future, which is necessary in order to
handle complex driving situations. Furthermore, this approach can lead to
inefficient learning as  it relies solely on experiences collected from the
environment, which may contain only sparse supervisory signals. Model-based
approaches address these issues by explicitly  a  predictive model of the world.
However, performing explicit model rollouts online during inference can be
prohibitively expensive especially if the number of potential future
trajectories considered is very large.

We address these issues by combining aspects of model-free and model-based
approaches. In particular, we learn to reason into the future by directly
costing trajectories without explicit model rollouts, resulting in  more
efficient  inference. In addition to using real experience from the environment,
our trajectory output representation allows us to learn from imagined (\ie,
counterfactual) experiences collected from an approximate world model, greatly
improving sample efficiency. 

\subsection{Preliminaries on RL}
\label{sec:preliminaries}
The goal of an SDV is to make safe decisions sequentially. This can be modeled
as a Markov Decision Process (MDP) : $\gM = (\gS, \gA, P, R, \gamma)$, where
$\gS$ and $ \gA$ represent state and action spaces respectively, such as raw
observations of the scene and control signals for the ego-vehicle. $P(s'|s, a)$
and $R(s, a, s')$ represent the transition dynamics and reward functions
respectively, and $\gamma \in (0, 1)$ is the discount factor. We are  interested
in learning an optimal policy that maximizes the expected discounted return,
\[\pi^*(a | s) = \argmax_\pi \E_{\pi, P} \left[ \sum_{t=0}^T \gamma^t R(s^t,
		a^t, s^{t+1}) \right]\ \]

Off-policy RL algorithms are popular solutions due to their high data efficiency
since they are agnostic to the data collecting policy and thus do not constantly
require fresh data. A general form of the off-policy learning process can be
described as iteratively alternating between \textit{policy evaluation} and
\textit{policy improvement} steps. Specifically, one can maintain a learnable
$Q$-function $Q^k(s, a) = \E_{\pi, P} \left[ \sum_{t=0}^T \gamma^t R(s^t, a^t,
		s^{t+1}) \right]$, which captures the expected future return when executing
$\pi^k(a |s)$, with $k$  being the learning iteration. In the policy evaluation
step, the Bellman operator $\mathcal{B}_{\pi}Q \coloneqq R + \gamma \gP_{\pi}Q$
is applied to update $Q$ based on simulated data samples. Here, the transition
matrix $\gP_{\pi}$ is a matrix coupled  with the policy $\pi$, \ie,
$\gP_{\pi}Q(s,a) = \E_{s'\sim P(s'|s, a), a'\sim\pi(a'|s')}[Q(s', a')]$. We can
then improve the policy $\pi$ to favor selecting actions that maximize the
expected $Q$-value. However, both steps require evaluations on all possible $(s,
	a)$ pairs, and thus this is intractable for large state and action spaces. In
practice, one can instead apply empirical losses over a replay buffer, \eg,
minimizing the empirical $\ell_2$ loss between the left and right hand side of
the Bellman operator. The replay buffer is defined as the set
$\mathcal{D}=\{(s,a,r,s')\}$ holding past experiences sampled from $P(s'|s, a)$
and $\pi(a|s)$. Putting this together, the updating rules can be described as
follows,
\begin{align}
	\label{eq:actor-critic}
	 & Q^{k+1} \leftarrow \argmin_{Q} \E_{s, a, r, s' \sim \mathcal{D}} \left[
		\left(
		(r + \gamma \E_{a'\sim\pi^k}[Q^k(s', a')])
		-Q(s, a)
		\right)^2
		\right]
	\text{(evaluation)},\nonumber                                              \\
	 & \pi^{k+1} \leftarrow (1-\epsilon)\argmax_{\pi}\E_{s\sim\mathcal{D},
		a\sim\pi}[Q^{k+1}(s,a)] + \epsilon U(a),
	\quad\text{(improvement)}
\end{align}
where $U$ is the uniform distribution and $\epsilon$ is introduced for
epsilon-greedy exploration, \ie, making a greedy action under $Q$ with
probability $1-\epsilon$ and otherwise randomly exploring other actions with
probability $\epsilon$.
Note that Eq.~\ref{eq:actor-critic} reduces to standard $Q$-learning when  we
use $\pi^{k+1}(s) = \argmax_{a}Q^{k+1}(s, a)$ as the policy improvement step
instead.

\subsection{Planning with TRAVL}
Our goal is to design a driving model that can perform long term reasoning into
the future by \emph{planning}. To this end, we define our action as a
trajectory $\tau = \{(x^0, y^0), (x^1, y^1), \cdots, (x^T, y^T)\}$, which
navigates the ego-vehicle for the next $T$ timesteps. Here  $(x^t, y^t)$ is the
spatial location in birds eye view (BEV) at timestep $t$. Inspired by humans, we
decompose the cost of following a trajectory into a short-term
\emph{cost-to-come}, $R_{\theta}(s, \tau)$,  defined over the next $T$
timesteps,
and a long-term \emph{cost-to-go} $V_{\theta}(s,\tau)$ that operates beyond that
horizon.
The final $Q$-function is defined as   \[Q_{\theta}(s, \tau) = R_{\theta}(s,
	\tau) + V_{\theta}(s, \tau)\] Note that both $R_\theta$ and $V_\theta$ are
predicted with a neural network. In the following, we  describe our input state
representation,  the backbone network and cost predictive modules used to
predict $Q_\theta$ follow by our  planning inference procedure.

\paragraph{Input Representation:}
Following~\cite{bansal2018chauffeurnet,houston2020one,rhinehart2021contingencies},
our state space $\gS$ contains an HD map as well as the motion history of the
past $T'$ seconds of both the ego-vehicle and other actors.
To make the input data amenable to standard convolutional neural networks
(CNNs), we rasterize the information into a BEV tensor, where for each frame
within the history horizon $T'$, we draw bounding boxes of all actors as 1
channel using a binary mask. The ego-vehicle's past positions  are also
rasterized similarly into  $T'$ additional channels. We utilize an $M$ channel
tensor to represent the HD map, where each channel encodes a different map
primitive, such as centerlines or the target route. Finally, we include two more
channels to represent the $(x, y)$ coordinates of BEV
pixels~\cite{liu2018intriguing}. This results in a input tensor of  size
$\mathbb{R}^{H\times W\times (2T'+M+2)}$, where $H$ and $W$ denotes the size of
our input region around the SDV.

\paragraph{Backbone Network:}
To extract useful contextual information, we feed the input tensor to a backbone
network. As the input modality is a 2D image, we employ a CNN backbone adapted
from ResNet~\cite{he2016deep}. Given an input tensor of size
$\mathbb{R}^{H\times W\times (2T'+M+2)}$, the backbone performs downsampling and
computes a final feature map $\mathbf{F}\in \frac{H}{8} \times \frac{W}{8}
	\times C$, where $C$ is the feature dimension. More details on the architecture
are provided in the supplementary.

\paragraph{Cost Predictive Header:}
We use a cost predictive header that takes an arbitrary trajectory $\tau$ and
backbone feature map $\mathbf{F}$ as inputs, and outputs two scalar values
representing the cost-to-come and cost-to-go of executing $\tau$. As $\tau$ is
represented by a sequence of 2D waypoints $\{(x^0, y^0), (x^1, y^1), \cdots,
	(x^T, y^T)\}$, we can extract context features of $\tau$ by indexing the $t$
channel of the backbone feature $\mathbf{F}$ at position $(x^t, y^t)$ for each
timestep $t$. We then concatenate the features from all timesteps  into a single
feature vector $\mathbf{f}_{\tau}$. Note that the backbone feature $\mathbf{F}$
encodes rich information about the environment, and thus such an indexing
operation is expected to help reason about the goodness  of a trajectory, \eg,
if it is collision-free and follows the map. We also include kinematic
information of $\tau$ into $\mathbf{f}_{\tau}$ by concatenating the position
($x^t, y^t$), velocity ($v^t$), acceleration ($a^t$), orientation ($\theta_t$)
and curvature ($\kappa_t, \dot{\kappa}_t$).  We use two shallow (3 layer)
multi-layer perceptrons (MLPs) to regress the cost-to-come and cost-to-go from
$\mathbf{f}_{\tau}$ before finally summing them to obtain our estimate of $Q(s,
	\tau)$.

\paragraph{Efficient Inference:}
Finding the optimal policy $\tau^* \coloneqq \argmax_{\tau}Q(s, \tau)$ given a
$Q$-function over trajectories is difficult as $\tau$ lies in a continuous and
high-dimensional space that has complex structures (\eg, dynamic constraints).
To make inference efficient, we approximate such an optimization problem using a
sampling
strategy~\cite{schlechtriemen2016wiggling,werling2010optimal,sadat2019jointly,zeng2019end}.
Towards this goal, we first sample a wide variety of trajectories $\gT$ that are
physically feasible for the ego-vehicle, and then pick the one with maximum
$Q$-value
$$\tau^* = \argmax_{\tau\in\gT}Q(s, \tau).$$ To obtain a set of trajectory
candidates $\gT$, we use a map-based trajectory sampler, which samples a set of
lane following and lane changing trajectories following a bicycle
model~\cite{polack2017kinematic}. Inspired by \cite{sadat2019jointly}, our
sampling procedure is in the Frenet frame of the road, allowing us to easily
sample trajectories which consider map priors, \eg, follow curved lanes.
Specifically, longitudinal trajectories are obtained by fitting quartic splines
to knots corresponding to varying speed profiles, while lateral trajectories are
obtained by first sampling sets of various lateral offsets (defined with respect
to reference lanes) at different longitudinal locations and then fitting quintic
splines to them.  In practice, we find embedding map priors in this manner can
greatly improve the model performance. In our experiments we sample roughly 10k
trajectories per state. Note that despite outputting an entire trajectory as the
action, for inference we  use an MPC style execution~\cite{camacho2013model}
where the agent only executes an initial segment of the trajectory before
replanning with the latest observation. Further discussion on this method of
planning can be found in the supplementary.

\subsection{Efficient Learning with Counterfactual Rollouts}
\label{sec:learning}
The most straightforward way to learn our model is through classical RL
algorithms. We can  write the policy evaluation step in
Eq.~\ref{eq:actor-critic} as
\begin{align}
	Q^{k+1} \leftarrow \argmin_{Q_{\theta}} \mathbb{E}_{\mathcal{D}}\left[
		\left(Q_{\theta} - \mathcal{B}_{\pi}^k Q^k\right)^2
		\right], \quad s.t. \quad Q_{\theta} = R_{\theta} + V_{\theta},
\end{align}
where $\mathcal{B}_{\pi}Q \coloneqq R + \gamma \gP_{\pi}Q$ is the Bellman
operator. However, as we show in our experiments and also demonstrated in other
works~\cite{wang2019benchmarking}, such model-free RL algorithms learn very
slowly. Fortunately, our trajectory-based formulation and decomposition of
$Q_{\theta}$ into $R_{\theta}$ and $V_{\theta}$ allow us to design a  more
efficient  learning algorithm that follows the spirit of model-based approaches.

One of the  main benefits of model-based RL~\cite{sutton1990integrated} is the
ability to efficiently sample imagined data through the  world dynamics model,
as this  helps bypass the need for unrolling the policy in simulation  which can
be computationally expensive. However, for complex systems the learned model is
likely to be also expensive, \eg, neural networks. Therefore, we propose a
simple yet effective world model where we assume the actors are not intelligent
and do not react to different SDV trajectories. Suppose we have a replay buffer
$\mathcal{D} = \{(s^t, \tau^t, r^t, s^{t+1})\}$ collected by interacting our
current policy $\pi$ with the simulator. To augment the training data with
cheaply generated imagined data $(s^{t}, \tau', r', s')$, we consider a
counterfactual trajectory $\tau'$ that is different from $\tau$. The resulting
counterfactual state $s'$ simply modifies $s^{t+1}$ such that the ego-vehicle
follows $\tau'$, while keeping the actors' original states the same. The
counterfactual reward can then be computed as $r' = R(s^t, \tau', s')$. We
exploit our near limitless source of counterfactual data for dense supervision
on the short term predicted cost-to-come $R_\theta$. Efficiently learning
$R_{\theta}$ in turn benefits the learning of the $Q$-function overall. Our
final learning objective (policy evaluation) is then
\begin{align}
	\label{eq:objective}
	Q^{k+1} & \leftarrow \argmin_{Q_{\theta}} \mathbb{E}_{\mathcal{D}}\left[
		\underbrace{\left(Q_{\theta}(s, \tau) - \mathcal{B}_{\pi}^k Q^k(s, \tau)\right)^2}_{\text{Q-learning}} +
		\alpha_k \underbrace{\mathbb{E}_{\tau' \sim \mu(\tau'|s)}\left(R_{\theta}(s,
			\tau') - r'\right)^2 }_{\text{Counterfactual Reward Loss}}
	\right],\nonumber                                                                    \\
	        & s.t. \quad Q_{\theta} = R_{\theta} + V_{\theta}, \quad V_{\theta} = \gamma
	\gP_{\pi}^kQ^k.
\end{align}
Here, $\mu$ is an arbitrary distribution over possible trajectories and
characterizes the exploration strategy for counterfactual supervision. We use a
uniform distribution over the sampled trajectory set $\mathcal{T}$ in all our
experiments for simplicity. To perform an updating step in
Eq.~\ref{eq:objective}, we approximate the optimal value of $Q_{\theta}$ by
applying a gradient step of the empirical loss of the objective. In practice,
the gradients are applied over the parameters of $R_{\theta}$ and $V_{\theta}$,
whereas $Q_{\theta}$ is simply obtained by $R_{\theta} + V_{\theta}$. We also
find that simply using the $Q$-learning policy improvement step suffices in
practice. We provide more implementation details in the supplementary material.
Note that we use counterfactual rollouts, and thus the simplified non-reactive
world dynamics, only as supervision for short term reward/cost-to-come
component. This can help avoid compounding errors of using such an assumption
for long-term imagined simulations.

\paragraph{Theoretical analysis:}
The counterfactual reward loss component can introduce modeling errors due to
our approximated world modeling. As the $Q$-learning loss term in
Eq.~\ref{eq:objective} is decreasing when $k$ is large, such errors can have
significant effects, hence it is non-obvious whether our learning procedure
Eq.~\ref{eq:objective} can satisfactorily converge. We now show that iteratively
applying policy evaluation in Eq.~\ref{eq:objective} and policy update in
Eq.~\ref{eq:actor-critic} indeed converges under some mild conditions.

\begin{lemma}
	Assuming $R$ is bounded by a constant $R_{\text{max}}$ and $\alpha_k$
	satisfies
	\begin{equation}
		\alpha_k <
		\left(\frac{1}{\gamma^kC}
		-1\right)^{-1}\left(\frac{\pi^k}{\mu}\right)_{min},
	\end{equation}
	with $C$ an arbitrary constant,  iteratively applying Eq.~\ref{eq:objective}
	and the policy update step in Eq.~\ref{eq:actor-critic} converges to a fixed
	point.
\end{lemma}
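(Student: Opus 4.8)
The plan is to reduce the iteration in Eq.~\ref{eq:objective} to an application of a $\gamma$-contraction perturbed by an error term that decays geometrically in $k$, and then invoke the standard error analysis of approximate value iteration. Convergence to a fixed point then follows as soon as the accumulated perturbations are shown to be summable against the contraction factor.

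First I would solve the inner minimization of Eq.~\ref{eq:objective} in closed form. Because the constraint fixes $V_\theta = \gamma\gP_\pi^k Q^k$ and the Bellman target expands as $\mathcal{B}_\pi^k Q^k = R + \gamma\gP_\pi^k Q^k$, the bootstrapping terms cancel inside the $Q$-learning square, leaving $(R_\theta(s,\tau)-R(s,\tau))^2$. The objective then becomes a weighted least-squares regression of $R_\theta$ onto two targets at each $(s,\tau)$: the realized reward $R$ carrying the replay-buffer weight $d(s,\tau)$, and the counterfactual reward $r'$ carrying weight $\alpha_k\,d(s)\mu(\tau|s)$. Setting the pointwise derivative to zero yields the minimizer $R_\theta^\star = R + w\,(r'-R)$ with $w(s,\tau)=\frac{\alpha_k\,d(s)\mu(\tau|s)}{d(s,\tau)+\alpha_k\,d(s)\mu(\tau|s)}$. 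Writing the on-policy buffer as $d(s,\tau)=d(s)\pi^k(\tau|s)$ and composing with the greedy ($Q$-learning) policy-improvement step in Eq.~\ref{eq:actor-critic}, the combined update reads $Q^{k+1}=\mathcal{B}^\star Q^k+\epsilon_k$, where $\mathcal{B}^\star$ is the Bellman optimality operator and $\epsilon_k := w\,(r'-R)$ collects the world-model error.

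Next I would bound $\epsilon_k$. Boundedness of $R$ by $R_{\text{max}}$ gives $\norm{r'-R}_\infty \le 2R_{\text{max}}$, while $w=\frac{\alpha_k}{\pi^k/\mu+\alpha_k}\le \alpha_k/(\pi^k/\mu)_{min}$. Substituting the hypothesis on $\alpha_k$ yields $w< \left(\frac{1}{\gamma^k C}-1\right)^{-1}=\frac{\gamma^k C}{1-\gamma^k C}$, which, for $\gamma^k C<1$, is $O(\gamma^k)$. Hence $\norm{\epsilon_k}_\infty \le b\,\gamma^k$ for a constant $b$ proportional to $R_{\text{max}}$. Finally, using that $\mathcal{B}^\star$ is a $\gamma$-contraction in $\norm{\cdot}_\infty$ with unique fixed point $Q^\star$, the recursion $\norm{Q^{k+1}-Q^\star}_\infty \le \gamma\norm{Q^k-Q^\star}_\infty + \norm{\epsilon_k}_\infty$ unrolls to $\norm{Q^{k+1}-Q^\star}_\infty \le \gamma^{k+1}\norm{Q^0-Q^\star}_\infty + b\,(k+1)\gamma^k$, whose right-hand side tends to $0$ since $(k+1)\gamma^k\to0$; this gives convergence to the fixed point.

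The main obstacle I anticipate is the closed-form reduction and the perturbation bound in the second and third steps: one must justify the cancellation of the bootstrap term that turns the $Q$-learning loss into a pure reward regression, solve the two-target weighted regression exactly, and then translate the resulting weight $w$ into a sup-norm bound expressed through $(\pi^k/\mu)_{min}$ and $C$ so that the hypothesis on $\alpha_k$ exactly produces the $O(\gamma^k)$ decay. By contrast, the concluding contraction argument is the routine approximate-value-iteration estimate, requiring only that the geometric factor $(k+1)\gamma^k$ dominate the accumulated perturbations.
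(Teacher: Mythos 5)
Your proposal is correct and follows essentially the same route as the paper: both solve the inner minimization of Eq.~\ref{eq:objective} in closed form to obtain $Q^{k+1}=\mathcal{B}_\pi^k Q^k$ plus a perturbation weighted by $\frac{\alpha_k\mu}{\pi^k+\alpha_k\mu}$, bound that weight by $O(\gamma^k)$ using the hypothesis on $\alpha_k$, and conclude from the $(k+1)\gamma^k\to 0$ unrolling. The only difference is bookkeeping: you track $\lVert Q^{k+1}-Q^*\rVert_\infty$ directly against the optimality operator (which also delivers Theorem 1 in the same stroke), whereas the paper bounds the successive differences $\lVert Q^{k+1}-Q^k\rVert_\infty$ to obtain a Cauchy sequence and identifies the limit as $Q^*$ in a separate argument.
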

Furthermore,  it converges to the optimal $Q$-function, $Q^*$. We refer the
reader to the supplementary material for detailed  proofs.
\begin{theorem}
	Under the same conditions as Lemma 1, our learning procedure converges to
	$Q^*$.
\end{theorem}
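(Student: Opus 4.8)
The plan is to show that, despite the extra counterfactual term, each update in Eq.~\ref{eq:objective} is merely a Bellman optimality backup corrupted by an additive, vanishing bias, so that the limit guaranteed by Lemma~1 must satisfy the Bellman optimality equation and hence equal $Q^*$. First I would exploit the two constraints $Q_\theta = R_\theta + V_\theta$ and $V_\theta = \gamma\gP_\pi^k Q^k$. Substituting them into the $Q$-learning residual collapses the bootstrapped value part and leaves $Q_\theta(s,\tau) - \mathcal{B}_\pi^k Q^k(s,\tau) = R_\theta(s,\tau) - R(s,\tau)$, so only the short-term component $R_\theta$ is actually being fit. Thus Eq.~\ref{eq:objective} is a weighted least-squares problem for $R_\theta$, with the true reward $R$ as target under the buffer distribution (weight from $\pi^k$) and the approximate counterfactual reward $r'$ as target under $\mu$ (weight $\alpha_k$). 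Because the greedy ($Q$-learning) improvement step makes $\gP_\pi^k Q^k$ the max-backup, the clean update (when $\alpha_k = 0$) is exactly $Q^{k+1} = \mathcal{B}^* Q^k$, where $\mathcal{B}^*$ is the Bellman optimality operator, a $\gamma$-contraction in $\|\cdot\|_\infty$ whose unique fixed point is $Q^*$.

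Next I would solve the weighted least-squares problem pointwise, in the tabular/realizable regime where $R$ lies in the hypothesis class. The minimizer at each $(s,\tau)$ is the convex combination of $R$ and $r'$ weighted by $\pi^k$ and $\alpha_k\mu$, so the update becomes $Q^{k+1} = \mathcal{B}^* Q^k + \delta_k$ with per-step bias $\delta_k(s,\tau) = \frac{\alpha_k\mu}{\pi^k + \alpha_k\mu}(r' - R)$. Using the boundedness $|r' - R| \le 2R_{\text{max}}$ and the coverage ratio gives $\|\delta_k\|_\infty \le \frac{\alpha_k}{(\pi^k/\mu)_{min}} 2R_{\text{max}}$. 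I would then plug in the hypothesis on $\alpha_k$: rearranging $\alpha_k < (\frac{1}{\gamma^k C} - 1)^{-1}(\pi^k/\mu)_{min}$ yields $\frac{\alpha_k}{(\pi^k/\mu)_{min}} < \frac{\gamma^k C}{1 - \gamma^k C}$, so $\|\delta_k\|_\infty = O(\gamma^k)$ decays geometrically and in particular $\delta_k \to 0$.

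Finally I would combine the vanishing bias with the contraction. The cleanest route is a limit argument: Lemma~1 already provides a limit $Q^\infty = \lim_k Q^k$; passing $k\to\infty$ in $Q^{k+1} = \mathcal{B}^* Q^k + \delta_k$ and using continuity (Lipschitzness) of $\mathcal{B}^*$ together with $\delta_k\to 0$ gives $Q^\infty = \mathcal{B}^* Q^\infty$, whose unique solution is $Q^*$. Equivalently, to expose a rate, I would unroll $\|Q^{k+1} - Q^*\|_\infty \le \gamma\|Q^k - Q^*\|_\infty + \|\delta_k\|_\infty$ into $\|Q^k - Q^*\|_\infty \le \gamma^k\|Q^0 - Q^*\|_\infty + \sum_{i=0}^{k-1}\gamma^{k-1-i}\|\delta_i\|_\infty$; with $\|\delta_i\|_\infty = O(\gamma^i)$ the sum is $O(k\gamma^{k-1})\to 0$, giving convergence to $Q^*$.

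The crux, and the step I expect to be the main obstacle, is the exact characterization and bounding of the bias $\delta_k$. This needs the realizability assumption so that the constrained minimizer is genuinely the pointwise convex combination rather than a projection carrying extra approximation error, and the coverage condition $(\pi^k/\mu)_{min} > 0$ so that every counterfactual trajectory is eventually corrected by real experience; the boundedness of $R$ (hence of $r'$) by $R_{\text{max}}$ controls $|r' - R|$ uniformly. It is precisely the $\gamma^k$ factor hidden in the hypothesis on $\alpha_k$ that turns a fixed, potentially biased backup into an asymptotically unbiased one, letting the cheap but inexact counterfactual supervision dominate early (small $k$, large admissible $\alpha_k$) while forcing its influence to zero as the true $Q$-learning signal sharpens.
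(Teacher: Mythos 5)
Your proposal is correct, and it reaches the same key intermediate identity as the paper -- the perturbed backup $Q^{k+1} = \mathcal{B}_{\pi}^k Q^k - \frac{\alpha_k\mu}{\pi^k+\alpha_k\mu}\left[R - r'\right]$ -- but it gets there and finishes differently. You derive the update by substituting the constraints to collapse the $Q$-learning residual to $R_\theta - R$ and solving a pointwise weighted least-squares problem; the paper instead differentiates the objective in $Q_\theta$ and uses the change of measure $\mathbb{E}_{\mu}[\cdot] = \mathbb{E}_{\pi}[\tfrac{\mu}{\pi}\cdot]$, arriving at the same formula inside the proof of Lemma 1. For the theorem itself the routes genuinely diverge: the paper uses Lemma 1's conclusion $\|Q^{k+1}-Q^k\|_\infty \to 0$ to show $Q^{k+1}$ approaches the resolvent expression $\hat{Q}^{k+1} = (I-\gamma\gP_{\pi}^k)^{-1}\bigl[\tfrac{\pi^k}{\pi^k+\alpha_k\mu}R + \tfrac{\alpha_k\mu}{\pi^k+\alpha_k\mu}\mathbb{E}_{\pi^k}r'\bigr]$ and then passes to the limit using $\alpha_k\to 0$, whereas you treat the iteration as approximate value iteration with an additive bias $\delta_k$, bound $\|\delta_k\|_\infty = O(\gamma^k)$ from the hypothesis on $\alpha_k$, and unroll the contraction inequality $\|Q^{k+1}-Q^*\|_\infty \le \gamma\|Q^k-Q^*\|_\infty + \|\delta_k\|_\infty$ to get $O(k\gamma^{k-1})$. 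Your argument buys an explicit convergence rate and does not need Lemma 1's conclusion as an input; the paper's buys a closed-form characterization of where each iterate is heading. Both proofs share the same unstated caveats -- that the improvement step is the greedy ($Q$-learning) one so the backup operator's fixed point is $Q^*$, and that the pointwise minimizer is realizable -- and you are actually more explicit about both than the paper is.
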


\section{Large Scale Closed-loop Benchmark Dataset}
\label{sec:benchmark}
\begin{figure}[t]
  \centering
  \includegraphics[width=\textwidth]{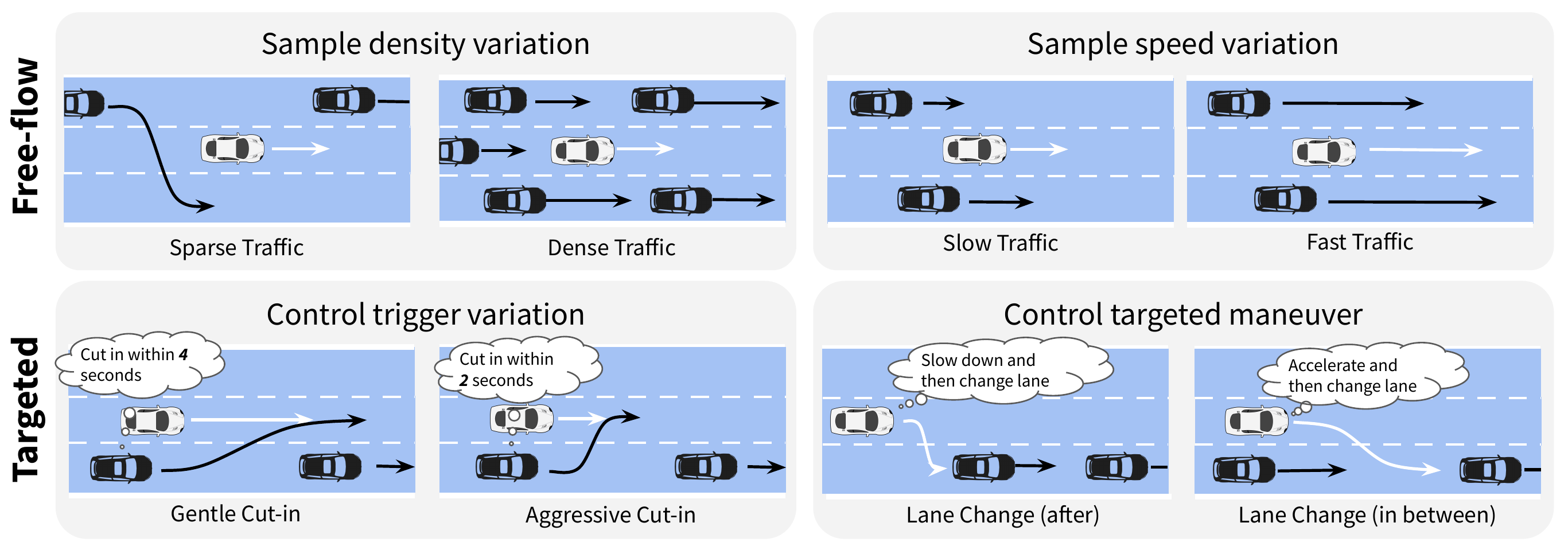}\hfill
  \caption{\textbf{Top row:} Free-flow scenarios  generated by sampling
    parameters such as density and actor speed. \textbf{Bottom row:} Targeted
    scenarios  generated by enacting  fine-grained control on the actors to
    target specific traffic situations. }\label{fig:targeted-freeflow-comp}
\end{figure}

We now describe how we design our large scale closed-loop benchmark. In our
simulator
the agent drives on a diverse set of highways, which contain standard, on-ramp,
merge and fork map topologies with varying curvature and number of lanes. We use
IDM~\cite{treiber2000congested} and MOBIL~\cite{kesting2007general} policies to
control actors. As our work is focused on learning neural planners in
closed-loop, we simulate bounding boxes and trajectory information of actors,
and leave sensor simulation for future work.

There are two popular ways to testing an autonomous driving system: 1)
uncontrolled traffic environments and 2) controlled scenarios that test certain
self-driving capabilities such as reacting to a cut-in. However, there has been
no analysis in the literature of the effects of \emph{training} in these
different environments. Towards this goal, we construct two training benchmarks
based on these two different paradigms.

\subsubsection{Free-flow Scenario Set:}
Free-flow scenarios are similar to what we observe in real-world  data, where we
do not enact any fine-grained control over other actors. We define a generative
model which samples from a set of parameters which define a scenario. Parameters
which vary the initial conditions of actors include density, initial speed,
actor class, and target lane goals. Parameters which vary actor driving behavior
include actor target speed, target gap, speed limit, maximum acceleration and
maximum deceleration. We also vary the map topology, road curvature, geometry,
and the number of lanes. We use a mixture of truncated normal distributions for
continuous properties and categorical distribution for discrete properties. More
details are provided in the supplementary.

\begin{table*}[t]
  \centering
  \setlength{\tabcolsep}{3pt}
  \begin{tabular}{c|c|ccccc}
    \specialrule{.2em}{.1em}{.1em}
    \multicolumn{2}{c|}{Method}                                           & Pass Rate $\uparrow$ & Col. Rate $\downarrow$
                                                                          & Prog. $\uparrow$     & MinTTC$\uparrow$       & MinDist$\uparrow$        \\
    \hline
    Imit. Learning                                                        & \multirow{4}{*}{C}   & 0.545                  & 0.177
                                                                          & 240                  & 0.00                   & 2.82                     \\
    PPO~\cite{schulman2017proximal}                                       &                      & 0.173                  & 0.163
                                                                          & 114                  & 0.00                   & 5.56                     \\
    A3C~\cite{mnih2016a3c}                                                &                      & 0.224                  & 0.159
                                                                          & 284                  & 0.03                   & 4.65                     \\
    RAINBOW\textsuperscript{\ref{simp-rainbow}} ~\cite{hessel2018rainbow} &
                                                                          & 0.435                & 0.270                  & 234               & 0.00
                                                                          & 1.38                                                                     \\
    \hline
    Imit. Learning                                                        & \multirow{5}{*}{T}   & 0.617                  & 0.261
                                                                          & \textbf{286}         & 0.00                   & 1.49                     \\
    PPO~\cite{schulman2017proximal}                                       &                      & 0.273                  & 0.249
                                                                          & 200                  & 0.00                   & 1.73                     \\
    A3C~\cite{mnih2016a3c}                                                &                      & 0.362                  & 0.137
                                                                          & 135                  & 0.30                   & 6.14                     \\
    RAINBOW\textsuperscript{\ref{simp-rainbow}} ~\cite{hessel2018rainbow} &
                                                                          & 0.814                & 0.048                  & 224               & 0.45
                                                                          & 9.70                                                                     \\
    \rowcolor{grey} \model (ours)                                         &                      & \textbf{0.865}         & \textbf{0.026}
                                                                          & 230                  & \textbf{0.82}          & \textbf{12.62}           \\
    \specialrule{.1em}{.05em}{.05em}
  \end{tabular}
  \caption{We compare our approach against several baselines. Here C is using
    the standard control setting and T is using our proposed trajectory-based
    architecture and formulation. We see that trajectory-based approaches
    outperforms their control-based counterparts. We also see that our proposed
    method is able to learn more efficiently and outperform baselines. }
  \label{table:sota}
\end{table*}

\subsubsection{Targeted Scenario Set:}
Targeted scenarios are those which are designed to test autonomous vehicles in
specific traffic situations. These scenarios are designed to ensure an
autonomous vehicle has certain capabilities or meets certain requirements (\eg,
the ability to stop for a leading vehicle braking, the ability to merge onto the
highway). In this work, we identified 3 ego-routing intentions (lane follow,
lane change, lane merge) and 5 behavior patterns for other agents (braking,
accelerating, blocking, cut-in, negotiating). Combining these options along with
varying the number of actors and where actors are placed relative to the ego
gives us a total of 24 scenario types (\eg lane change with leading and trailing
actor on the target lane). Each scenario type is then parameterized by a set of
scenario-specific parameters such as  heading and speed of the ego at
initialization, the relative speed and location of other actors at
initialization, time-to-collision and distance thresholds for triggering
reactive actions (\eg when an actor performs a cut-in), IDM parameters of other
actors as well as  map parameters. We then procedurally generate variations of
these scenarios by varying the values of these parameters, which result in
diverse scenario realizations with actor behaviors that share similar semantics.

\subsubsection{Benchmarking:}
We briefly explain how we sample parameters for scenarios. As the free-flow
scenarios aim to capture nominal traffic situations, we simply sample
\emph{i.i.d} random parameter values and hold out a test set. In contrast, each
targeted scenario serves for benchmarking a specific driving capability, and
thus we should prevent training and testing on the same (or similar) scenarios.
To this end, we first generate a test set of scenarios aiming to provide
thorough evaluations over the entire parameterized spaces. Because enumerating
all possible combinations of parameter is intractable, we employ an all-pairs
generative approach~\cite{nie2011survey} which provides a much smaller set that
contains all possible combinations for any pair of discrete parameters. This is
expected to provide efficient testing while covering a significant amount of
failure cases~\cite{mira2019test}. More details on this approach are in the
supplementary. Finally, we hold out those test parameters when drawing random
samples for the training and validation set.

\section{Experiments}
\label{experiments}
\begin{table*}[t]
  \centering
  \setlength{\tabcolsep}{3pt}
  \begin{tabular}{c|cc|cc|cc|cc}
    \specialrule{.2em}{.1em}{.1em}
    \multicolumn{3}{c|}{}                                             &
    \multicolumn{6}{c}{Test}                                                                                                               \\
    \cline{4-9} \multicolumn{3}{c|}{}                                 &
    \multicolumn{2}{c|}{Pass Rate $\uparrow$}                         &
    \multicolumn{2}{c|}{Collision Rate $\downarrow$}                  &
    \multicolumn{2}{c}{Progress $\uparrow$}                                                                                                \\
    \multicolumn{1}{c}{}                                              &
    \multicolumn{2}{c|}{}
                                                                      & Free-flow & Targeted & Free-flow & Targeted & Free-flow & Targeted \\ \hline
    \parbox[t]{2mm}{\multirow{4}{*}{\rotatebox[origin=c]{90}{Train}}} &
    \multirow{2}{*}{RB\textsuperscript{\ref{simp-rainbow}}+T}         &
    Free-flow                                                         & 0.783     & 0.453    & 0.198     & 0.228    & 146       & 173      \\ & &
    Targeted                                                          & 0.885     & 0.815    & 0.104     & 0.048    & 231       & 224      \\
    \cline{2-9} \noalign{\vskip\doublerulesep \vskip-\arrayrulewidth}\cline{2-9}
                                                                      &
    \multirow{2}{*}{TRAVL}                                            &
    Free-flow                                                         & 0.784     & 0.696    & 0.198     & 0.177    & 229       & 219      \\ & &
    Targeted                                                          & 0.903     & 0.865    & 0.089     & 0.026    & 172       & 230      \\
    \specialrule{.1em}{.05em}{.05em}
  \end{tabular}
  \caption{We train RAINBOW\textsuperscript{\ref{simp-rainbow}} and TRAVL on
    different sets and evaluate on different sets. We see that training on
    targeted scenarios performs better than training on free-flow scenarios,
    even when evaluated on free-flow scenarios. }
  \label{table:target_set}
\end{table*}

In this section, we showcase the benefits of our proposed learning method \model
by comparing against several baselines. We empirically study the importance of
using targeted scenarios compared to free-flow scenarios for training and
evaluation. Finally, we further study the effect of data diversity and scale and
show that large scale, behaviorally diverse data is crucial in learning good
policies.

\subsubsection{Datasets and Metrics:}
The free-flow dataset contains 834 training and 274 testing scenarios. The
targeted dataset contains 783 training and 256 testing scenarios. All scenarios
last for 15 seconds on average.

We use a number of autonomy metrics for evaluating safety and progress.
\emph{Scenario Pass Rate} is the percentage of scenarios that pass,  which is
defined as reaching the goal (e.g., maintain a target lane, reach a distance in
a given time) without collision or speeding violations. \emph{Collision Rate}
computes the percentage of scenarios where ego collides. \emph{Progress}
measures the distance traveled in meters before the scenario ends.
\emph{Minimum Time to Collision (MinTTC)}
measures the time to collision with another actor if the ego state were
extrapolated along its future executed trajectory, with lower values meaning
closer to collision.
\emph{Minimum Distance to the Closest Actor (MinDistClAct)}
computes the minimum distance between the ego and any other actor in the scene.
We use the median when reporting metrics as they are less sensitive to outliers.

\subsubsection{Closed-loop Benchmarking:}
We train and evaluate \model and several baselines on our proposed targeted
scenario set. We evaluate A3C~\cite{mnih2016a3c},
PPO~\cite{schulman2017proximal} and a simplified RAINBOW\footnote{We only
  include prioritized replay and multistep learning as they were found to be the
  most applicable and important in our setting. \label{simp-rainbow}}
~\cite{hessel2018rainbow} as baseline RL algorithms, and experiment with control
(C) and trajectory (T) output representations. We also include imitation
learning (IL) baselines supervised from a well-tuned auto-pilot. In the control
setting, the action space consists of 9 possible values for steering angle and 7
values for throttle, yielding a total of 63 discrete actions. In the trajectory
setting, each trajectory sample is treated as a discrete action. All methods use
the same backbone and header architecture, trajectory sampler, and MPC style
inference when applicable. Our reward function consists of a combination of
progress, collision and lane following terms. More details on learning,
hyperparameters, reward function, and baselines are provided in the
supplementary material.  

\begin{figure}[t]
  \centering
  \includegraphics[width=\textwidth]{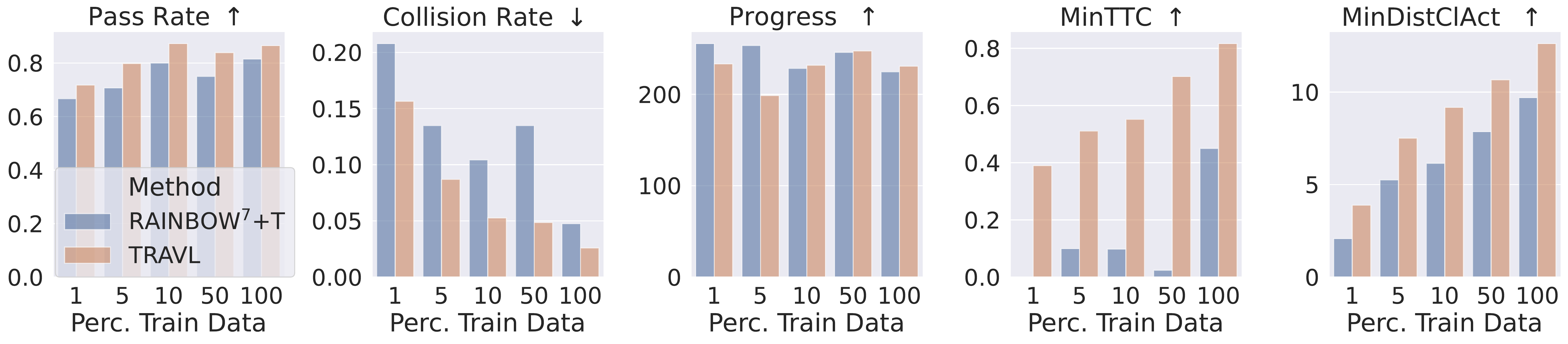}
  \caption{Increasing scenario diversity improves performance across the board.}
  \label{fig:metrics}
\end{figure}

\begin{wrapfigure}{r}{0.37\textwidth}
  \centering
  \includegraphics[width=0.35\textwidth]{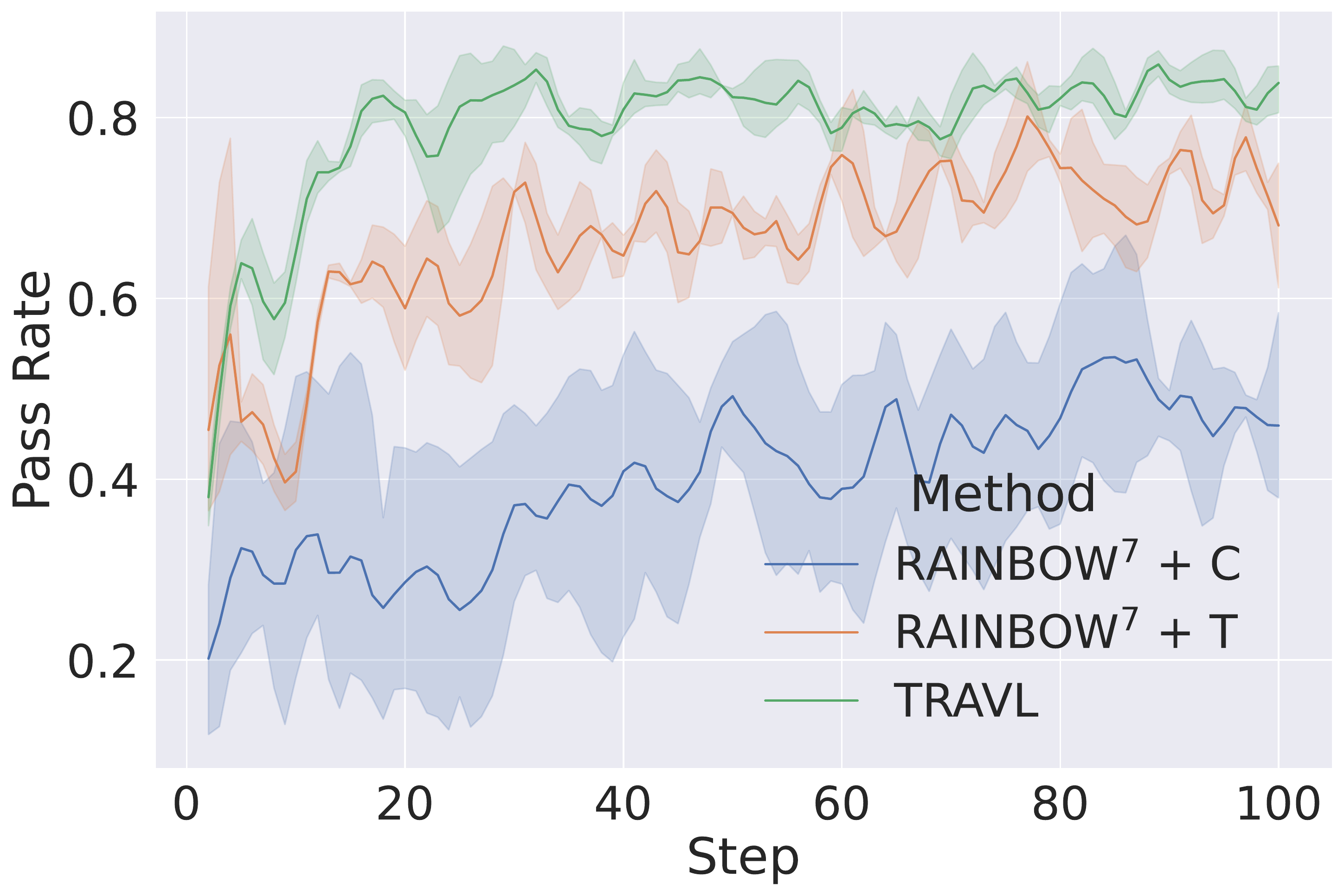}
  \caption{Training curves for 3 runs. TRAVL has the least variance and
    converges faster. }
  \label{fig:training-curve}
\end{wrapfigure}
As shown in Table.~\ref{table:sota}, trajectory-based methods outperform their
control-based counterparts, suggesting our proposed trajectory-based formulation
and architecture allow models to better learn long-term reasoning and benefit
from the trajectory sampler's inductive bias. We also see that
RAINBOW\textsuperscript{\ref{simp-rainbow}}+T outperforms other RL
trajectory-based baselines. This suggests that trajectory \emph{value learning}
is easier compared to learning a policy directly over the trajectory set.
Furthermore, its off-policy nature allows the use of a replay buffer for more
efficient learning. In contrast, on-policy methods such as A3C and PPO have
difficulty learning, \eg, it takes 10x longer to overfit to a single scenario
compared to off-policy approaches. This aligns with similar findings
in~\cite{dosovitskiy2017carla}. Additionally, IL baselines outperform weaker RL
baselines that suffer from issues of sample complexity, but falls short to more
efficient RL baselines due to the problem of distribution shift. Finally, TRAVL
outperforms the baselines. We believe this is because our  model-based
counterfactual loss provides denser supervisions and thus reduces noisy
variances during learning. This is empirically validated in
Fig.~\ref{fig:training-curve}  as our method has the least variance and
converges much faster.

\subsubsection{Targeted vs Free-flow:}
\label{sec:targeted-set}
We now study the effect of using different types of scenarios for training by
comparing models learned on our \emph{targeted} vs \emph{free-flow} set. As
shown in Table~\ref{table:target_set}, a model learned on the targeted set
performs the best. Notably this is \emph{even true when evaluating on the
  free-flow test set}, which is closer in distribution to the free-flow train set.
The effectiveness of the targeted set can be due to two reasons. Firstly, as
scenarios and actors are carefully designed, each targeted scenario is more
likely to provide interesting interactions, resulting in stronger learning
signals. On the contrary, many of the free-flow scenarios can be relatively
monotonous, with fewer interactions among actors. Secondly, the design of each
targeted scenario is driven by autonomous driving capabilities which where
determined with expert prior knowledge and are specifically meant to capture
what is necessary to be able to drive in nearly all scenarios. As a result, each
type of targeted scenario can be viewed as a basis over the scenario space.
Sampling behavioral variations results in a scenario set that  provides wide
coverage.\looseness=-1

\begin{table*}[t]
  \centering
  \setlength{\tabcolsep}{3pt}
  \begin{tabular}{c|ccccc}
    \specialrule{.2em}{.1em}{.1em}
    Method         & Pass Rate $\uparrow$ & Col. Rate $\downarrow$ & Prog.
    $\uparrow$     & MinTTC $\uparrow$    & MinDist $\uparrow$                    \\
    \hline
    Map Variation  & 0.738                & 0.070                  & 230
                   & 0.53                 & 8.97                                  \\
    Beh. Variation & \textbf{0.872}       & \textbf{0.022}         & 228
                   & 0.60                 & 9.82                                  \\
    Both           & 0.865                & 0.026                  & \textbf{231}
                   & \textbf{0.82}        & \textbf{12.6}                         \\
    \specialrule{.1em}{.05em}{.05em}
  \end{tabular}
  \caption{We train a \model agent on datasets with different axes of variation.
    Behavioral variation has larger effects than map for learning robust
    driving policies.\looseness=-1}
  \label{table:map-variation}
\end{table*}

\subsubsection{Behavioral scale and diversity:}
We now study how many scenarios we need for learning robust policies. Standard
RL setups use only a single environment (\eg, a fixed set of behavioral
parameters for non-ego agents) and rely on the stochastic nature of the policy
to collect diverse data. However, we find this is not enough. We train models on
datasets with varying amount  of scenario variations while keeping the total
number of training simulation steps constant. As shown in
Fig.~\ref{fig:metrics}, models trained with more diverse scenario variations
exhibit better performance. In particular, we see that while metrics
like pass rate and progress saturate quickly, safety-related metrics improve as
we increase the number of variations. This suggests that adding in data
diversity allows the model to be better prepared for safety-critical situations.
We also study which axis of variation has the largest effect. Specifically, we
disentangle map (\ie, geolocation) variations (road curvature, number of lanes,
topologies) and behavioral variation (actor triggers, target speeds) and
construct datasets that only contain one source of diversity while keeping the
total number of scenarios the same. Table~\ref{table:map-variation} shows that
\model is able to perform well even without map variations as our
trajectory-based formulation allows us to embed strong map priors into the
model. However, the behavioral variation component is crucial in learning more
robust policies.\looseness=-1

\section{Conclusion}
We have studied how to design traffic scenarios and scale training environments
in order to create an effective closed-loop benchmark for autonomous driving. We
have proposed a new method to efficiently learn driving policies which can
perform long-term reasoning and planning. Our method reasons in trajectory space
and can efficiently learn in closed-loop by leveraging additional imagined
experiences. We provide theoretical analysis and empirically demonstrate the
advantages of our method over the baselines on our  new benchmark.

\clearpage
\bibliographystyle{splncs04}
\bibliography{egbib}
\clearpage
\appendix

\newpage
\null
\vskip .375in
\begin{center}
  {\Large \bf Supplementary Material \par}
  \vspace*{24pt}
\end{center}
In this supplementary material, we provide  details about the implementation of
our method and benchmark, as well as more experimental analysis. In the
following, we introduce the backbone architecture and trajectory sampler in
Section~\ref{sec:supp-tech}. We then provide implementation details of learning
in Section~\ref{sec:supp-learning-objective} and reward functions in
Section~\ref{sec:supp-reward}. The proof of our theoretical analysis is
presented in Section~\ref{sec:supp-theory}. We also explain our benchmark
construction in more details in Section~\ref{sec:supp-benchmark}. We show some
additional quantitative  and qualitative analysis of TRAVL in
Section~\ref{sec:supp-quan} and Section~\ref{sec:supp-qual} respectively.
Finally, a high level overview this work can be found in the video
\texttt{rethinking_clt.mp4}.

\section{Technical Details of TRAVL}
\label{sec:supp-tech}
\subsection{Backbone Architecture}
Given an input rasterization tensor, our backbone network first performs three
layers of $3\times3$ convolution with 64 channels. It then applies 4 consecutive
\textit{ResBlock} units. Each block consists of a $1\times1$, $3\times3$ and
$1\times1$ convolutional layer as well as a skip-connection between input and
output features. The input channels of these 4 units are $(64, 256, 1024, 4096)$
respectively and the convolution kernels for each layer has the same number of
channels as the inputs, except for the last $1\times1$ layer that upsamples
channels for the next stage. Besides, the $3\times3$ layer in each unit has a
stride of 2 to downsample the feature map. Finally, we use two additional
$3\times3$ convolutional layers to reduce the channel number to $C=512$ without
further downsampling. This produces a final backbone feature map $\mathbf{F}\in
  \frac{H}{8} \times \frac{W}{8} \times 512$.

\subsection{Trajectory Sampler}
As stated in the main paper, we use a trajectory sampler which samples
longitudinal and lateral trajectories with respect to reference lanes. In
Figure~\ref{fig:trajectory_samples} we show a visualization of a trajectory
sample set. As our trajectory sampler considers map priors through the Frenet
frame, it can produce smooth trajectories compatible with the lane shapes. This
introduces inductive biases to driving maneuvers and is expected to ease the
learning.

\subsection{Planned vs. executed trajectory mismatch during MPC}
Because our method plans in an MPC fashion, an entire trajectory is selected as
the action but only the initial segment is executed, resulting in a mismatch.
One way we have tried to address this mismatch is through executing an entire
trajectory during rollout (instead of replanning in an MPC fashion) to collect
experience into the replay buffer, yet we didn't notice significant gains over
our current approach. One potential reason for this is that, even though using
the entire trajectory is less theoretically complex, it also significantly
reduces the number of simulated (state, action) pairs since an action now takes
longer simulation time to execute. With limited computation resources, such an
approach might degrade the performance due to less data.

\section{Learning}
\label{sec:supp-learning}
\subsection{Learning Objective}
\label{sec:supp-learning-objective}
Recall that our learning process alternates between the \textit{policy
  evaluation} and \textit{policy improvement} step. The policy evaluation step we
use is described in Eq.~\ref{eq:objective} in the main paper, as well as below
\begin{align}
  \label{eq:supp-objective}
  Q^{k+1} & \leftarrow \argmin_{Q_{\theta}} \mathbb{E}_{\mathcal{D}}\left[
    \underbrace{\left(Q_{\theta}(s, \tau) - \mathcal{B}_{\pi}^k Q^k(s, \tau)\right)^2}_{\text{Q-learning}} +
    \alpha_k \underbrace{\mathbb{E}_{\tau' \sim \mu(\tau'|s)}\left(R_{\theta}(s,
      \tau') - r'\right)^2 }_{\text{Counterfactual Reward Loss}}
  \right],\nonumber                                                                    \\
          & s.t. \quad Q_{\theta} = R_{\theta} + V_{\theta}, \quad V_{\theta} = \gamma
  \gP_{\pi}^kQ^k.
\end{align}
The policy improvement step is described in Eq.~\ref{eq:actor-critic} in the main paper, as
well as below
\begin{align}
  \label{eq:supp-policy-improvement}
   & \pi^{k+1} \leftarrow (1-\epsilon)\argmax_{\pi}\E_{s\sim\mathcal{D},
    a\sim\pi}[Q^{k+1}(s,a)] + \epsilon U(a),
\end{align}
For the \textit{policy evaluation} step, we use SGD to optimize an empirical
objective (\ie, a mini-batch estimation) of Eq.~\ref{eq:supp-objective}. Note
that Eq.~\ref{eq:supp-objective} can be converted as a Lagrangian term.
Essentially, we take gradient steps of the following loss function $\mathcal{L}$
over parameter $\theta$ to obtain the optimal solution for
Eq.~\ref{eq:supp-objective},
\begin{align}
  \label{eq:loss}
  \mathcal{L} = \frac{1}{|\mathcal{D}|}\sum_{(s, \tau, r, s')} & \left[
  \underbrace{\left(R_{\theta}(s, \tau) + V_{\theta}(s, \tau) - r - \gamma Q^k(s',
  \pi^k(\tau'|s'))\right)^2}_{Q-learning} \right.\nonumber                                                                                             \\
                                                               & + \underbrace{\alpha \frac{1}{|\mathcal{T}|}\sum_{\tau'\neq\tau, \tau'\in\mathcal{T}}
  (R_{\theta}(s, \tau') - r')^2}_{Counterfactual-loss}\nonumber                                                                                        \\
                                                               & \left.+ \underbrace{\lambda \left(V_{\theta}(s, \tau) - \gamma Q^k(s',
    \pi^k(\tau'|s'))\right)^2}_{Lagrangian-loss}
  \right].
\end{align}
However, this involves an expensive double loop optimization, \ie, outer loop
for iterating $Q^k$ and inner loop for minimizing $\mathcal{L}$. We hence simply
apply one gradient step for the inner loop updating $Q^k$ to $Q^{k+1}$. In
practice, we also found using a cross-entropy loss to minimize the KL-divergence
between $e^{-R_{\theta}}$ and $e^{-r'}$ helps stabilize training compared to
using $\ell_2$ loss for the counterfactual term, possibly because the former one
is less prone to outlier estimation of $r'$ which is caused by our approximated
world modeling. Our overall learning process is summarized in
Algorithm~\ref{algo:learning}.

\subsubsection{Implementation Details:} We train our method using the Adam
optimizer~\cite{kingma2014adam}. We use a batch size of 10 and learning rate of
0.0001. To accelerate training, we collect simulation data asynchronously with 9
instances of the simulator and store them in a prioritized replay buffer. We
initialize the $\epsilon$ as 0.1 and linearly decay it to $0.01$ in the first
200k steps, and terminate learning at 1 million steps as the model converges.
Besides, we use $\gamma=0.95$, $\alpha=1.0$ and $\lambda=0.01$.

Our imitation learning baselines use the same input representation and network
as our model. We replace the learning loss with $\ell_2$ loss for the control
based model and max-margin for the trajectory sampling based
model~\cite{zeng2019end}. We use a well-tuned auto-pilot model as our expert
demonstration, which has access to all ground-truth world states in the
simulation. Note that we use rasterization of detection boxes as inputs for all
approaches. Therefore the baselines are similar to the privileged agent in
LBC~\cite{chen2020learning}.

\begin{figure}[H]
  \centering
  \includegraphics[width=\textwidth]{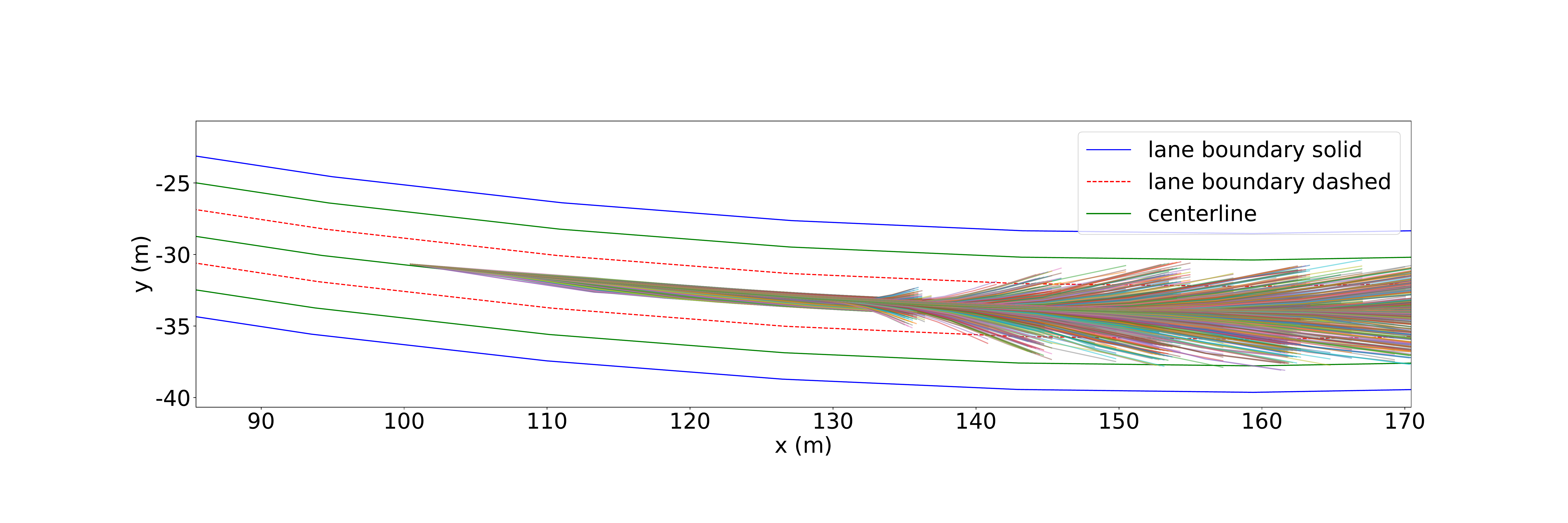}
  \caption{Example samples from our trajectory sampler which uses map
    information. }
  \label{fig:trajectory_samples}
\end{figure}

\subsection{Reward Function}
\label{sec:supp-reward}
Our reward function $R$ is a linear combination of \textit{progress},
\textit{collision}, and \textit{lane following} terms:
\[R(s^t, \tau^t, s^{t+1}) =C_p\cdot R_p(s^t, \tau^t, s^{t+1})+C_c\cdot R_c(s^t,
  \tau^t, s^{t+1})+C_l\cdot R_l(s^t, \tau^t, s^{t+1}),  \] where $C_p=0.6,
  C_c=40.0, C_l=1.0$ are constants designed to balance the relative scale between
the reward terms. $R_p$ is the progress term, and rewards the agent for distance
traveled along the goal lane. Here, a goal lane is defined by an external router
and we assume to have access to it. We use $D_{travel}$ to denote the traveled
distance between the projections of $s^t$ and $s^{t+1}$ on the goal lane, and
$D_{lane}$ to denote the distance between $s^t$ and its projection. The
\textit{progress reward} is defined as $R_p =  e^{-0.2 \times
      D_{lane}}D_{travel}$, where the term $e^{-0.2 \times D_{lane}}$ penalizes the
agent for driving further from the goal lane ($D_{lane}$). $R_c$ is a term
penalizing the agent for collisions, and is defined as:
\begin{equation*}
  R_c(s^t, \tau^t, s^{t+1}) = \left\{
  \begin{array}{ll}
    -1.0 & \text{if the agent has collided at $s^{t+1}$}, \\
    0.0  & \text{otherwise}.
  \end{array}
  \right.
\end{equation*}
Finally, $R_l$ is a lane following term penalizing the agent for deviating from
the goal lane. For an action $\tau = \{(x^0, y^0), (x^1, y^1), \cdots (x^T,
  y^T)\}$, $R_l$ is defined as the sum of the negative distances between each
$(x^i, y^i)$ and its projection on the goal lane.

\begin{algorithm}[t]
  \caption{TRAVL: TRAjectory Value Learning}
  \label{algo:learning}
  \begin{algorithmic}[1]
    \REQUIRE Simulator, Training Scenario Set \INITIALIZE $\mathcal{D} \leftarrow
      \emptyset$, $\pi(\tau|s) \leftarrow \text{Uniform$(\tau)$}$, TRAVL network
    $\leftarrow$ random weights.
    \rollout
    \WHILE {Learning has not ended} \STATE Sample a scenario variation from the
    training scenario set. \STATE Produce $(s^t, \tau^t, r^t, s^{t+1})$ by
    interacting the policy $\pi$ and the simulator on the sampled scenario.
    \STATE Store $(s^t, \tau^t, r^t, s^{t+1})$ to the replay buffer
    $\mathcal{D}$. \ENDWHILE

    \learning
    \FOR{$k=0, \cdots, $ max\_iter} \STATE Draw (mini-batch) samples $(s^t,
      \tau^t, r^t, s^{t+1})$ from $\mathcal{D}$. \STATE Draw a set of trajectory
    samples $\mathcal{T}$ given $s^t$. \STATE Compute $R_{\theta}(s^t, \tau^t)$,
    $V_{\theta}(s^t, \tau^t)$ and $R_{\theta}(s^t, \tau')$, $V_{\theta}(s^t,
      \tau')$ for $\tau' \in \mathcal{T}$ using TRAVL network. \STATE Evaluate $r'
      = R(s^t, \tau', s')$ for $\tau'\in\mathcal{T}$ using reward functions.
    \STATE Compute $\mathcal{L}$ using Eq.~\ref{eq:loss}. \STATE Update network
    parameter $\theta$ using gradients of $\mathcal{L}$. \STATE $Q_{\theta}
      \leftarrow R_{\theta} + V_{\theta}$. \STATE $\pi(\tau|s) \leftarrow \left\{
      \begin{array}{ll}
        \argmax_{\tau} Q_{\theta}(s, \tau), & \text{with probability
        $1-\epsilon$}                                                             \\
        \text{randomly sample $\tau$},      & \text{with probability $\epsilon$}.
      \end{array}
      \right. $

    \ENDFOR

  \end{algorithmic}
\end{algorithm}

\section{Theoretical Analysis}
\label{sec:supp-theory}
\begin{lemma}
  Assuming $R$ is bounded by a constant $R_{\text{max}}$ and $\alpha_k$
  satisfies
  \begin{equation}
    \alpha_k <
    \left(\frac{1}{\gamma^kC}
    -1\right)^{-1}\left(\frac{\pi^k}{\mu}\right)_{min},
  \end{equation}
  with $C$ an arbitrary constant,  iteratively applying
  Eq.~\ref{eq:supp-objective} and the policy update step in
  Eq.~\ref{eq:supp-policy-improvement} converges to a fixed point.
\end{lemma}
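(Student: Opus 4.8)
The plan is to show that, in the tabular (fully expressive) regime where the inner minimization can be solved pointwise, the constrained policy-evaluation update of Eq.~\ref{eq:supp-objective} is exactly a Bellman backup corrupted by a small, controllable perturbation, and then to invoke a perturbed-contraction argument. First I would exploit the constraint $V_\theta = \gamma \gP_\pi^k Q^k$ together with $Q_\theta = R_\theta + V_\theta$ to simplify the $Q$-learning residual: since $\mathcal{B}_\pi^k Q^k = R + \gamma \gP_\pi^k Q^k$, we get $Q_\theta - \mathcal{B}_\pi^k Q^k = R_\theta - R$, so the objective reduces to a pure regression of $R_\theta$ onto two targets — the true reward $R$ weighted by the buffer mass $\pi^k(\tau\mid s)$, and the counterfactual reward $r'$ weighted by $\alpha_k\,\mu(\tau\mid s)$. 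Minimizing pointwise gives the closed form
\[
R_\theta^{k+1}(s,\tau)=\frac{\pi^k R + \alpha_k \mu\, r'}{\pi^k + \alpha_k \mu},
\]
so that $Q^{k+1} = \mathcal{B}_\pi^k Q^k + \delta_k$ with perturbation $\delta_k = w_k\,(r'-R)$ and weight $w_k = \alpha_k\mu/(\pi^k+\alpha_k\mu)$.

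The second step is to bound this perturbation using the hypothesis on $\alpha_k$. Since $R$ (hence $r'$) is bounded by $R_{\text{max}}$, the modeling error obeys $|r'-R|\le 2R_{\text{max}}$, while $w_k \le \alpha_k \mu/\pi^k \le \alpha_k/(\pi^k/\mu)_{\min}$. Substituting the assumed bound $\alpha_k < (\tfrac{1}{\gamma^k C}-1)^{-1}(\pi^k/\mu)_{\min}$ — which is well-posed once $C<1$, since then $\gamma^k C<1$ for all $k$ — yields $w_k < \gamma^k C/(1-\gamma^k C)$, hence $\|\delta_k\|_\infty \le \epsilon_k := \tfrac{\gamma^k C}{1-\gamma^k C}\,2R_{\text{max}}$, which decays geometrically like $\gamma^k$. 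This is the quantitative heart of the lemma: the $\gamma^k$ factor is engineered precisely so the modeling-error perturbation shrinks at the same rate as the contracting $Q$-learning residual.

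Finally I would close with a perturbed-contraction estimate. Composing evaluation with the greedy policy-improvement step makes the unperturbed map the Bellman optimality operator, a $\gamma$-contraction in $\|\cdot\|_\infty$ with unique fixed point $Q^*$, so $\|Q^{k+1}-Q^*\| \le \gamma\|Q^k-Q^*\| + \epsilon_k$. Unrolling and using $\sum_{j<k}\gamma^{k-1-j}\epsilon_j = O(k\gamma^k)\to 0$ shows $Q^k \to Q^*$, establishing convergence to a fixed point (and, as the Theorem adds, to $Q^*$ itself). The \emph{main obstacle} is exactly the tension flagged in the text: once the $Q$-learning residual has decayed, any perturbation of constant size would dominate and destroy convergence, so the crux is verifying that the $\alpha_k$ schedule forces $\epsilon_k=O(\gamma^k)$ and that this geometric decay remains summable against the contraction. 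Secondary care is needed to justify the pointwise minimization under full support of $\pi^k$ and $\mu$ — guaranteed by $\epsilon$-greedy exploration and uniform $\mu$, so that $(\pi^k/\mu)_{\min}>0$ — and to absorb the residual deviation of the $\epsilon$-greedy $\pi^k$ from exact greediness into the same perturbation term.
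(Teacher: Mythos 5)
Your proposal matches the paper's proof in all essentials: both reduce the constrained policy-evaluation step to a closed-form perturbed Bellman backup $Q^{k+1}=\mathcal{B}^k_{\pi}Q^k - \frac{\alpha_k\mu}{\pi^k+\alpha_k\mu}\left(R-r'\right)$, bound the perturbation weight by $O(\gamma^k C)$ using the stated condition on $\alpha_k$ together with $|R-r'|\le 2R_{\max}$, and finish with a perturbed $\gamma$-contraction whose accumulated error is $O(k\gamma^{k-1})\to 0$. The only difference is in the closing step and is essentially cosmetic: you contract the distance to $Q^*$ directly (thereby absorbing the Theorem's conclusion into the Lemma), whereas the paper contracts consecutive differences $\|Q^{k+1}-Q^k\|_{\infty}$ to establish existence of a fixed point and defers the identification of the limit with $Q^*$ to the subsequent Theorem.
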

\begin{proof}
  To prove lemma 1 is correct, it suffices to show that the updating rule in
  Eq.~\ref{eq:supp-objective} leads to $\lim_{k=\infty}||Q^{k+1} -
    Q^k||_{\infty} = 0$. To find out the optimal $Q_{\theta}$ at iteration $k$, we
  take the derivative of the R.H.S. and set it to $0$ as follows
  \small
  \begin{align*}
     & \mathbb{E}_{\tau\sim\pi^k}\left[Q_{\theta}(s, \tau) -
    \mathcal{B}^k_{\pi}Q^k(s, \tau)\right] + \alpha_k
    \mathbb{E}_{\tau'\sim\mu}\left[
      Q_{\theta}(s, \tau') - \gamma\gP_{\pi}^k
      Q^k(s,\tau') - \mathbb{E}_{\tau\sim\pi^k}r'
      \right] = 0.
    \nonumber                                                \\
  \end{align*}
  \normalsize
  Now we will interchange $\tau$ and $\tau'$ in the second term of the equation
  above ($\alpha_k\mathbb{E}_{\tau'\sim\mu}[\cdots]$) and use the fact that
  $\mathbb{E}_{\mu}[\cdots] = \mathbb{E}_{\pi}[\frac{\mu}{\pi}\cdots]$ to
  obtain.
  \small
  \begin{align*}
     & \mathbb{E}_{\tau\sim\pi^k}\left[Q_{\theta}(s, \tau) -
    \mathcal{B}^k_{\pi}Q^k(s, \tau) +     \frac{\alpha_k
      \mu}{\pi^k}Q_{\theta}(s, \tau) - \frac{\alpha_k
      \mu}{\pi^k}\gamma\gP_{\pi}^k
    Q^k(s,\tau) - \frac{\alpha_k\mu}{\pi^k}\mathbb{E}_{\tau'\sim\pi^{k}}r'
    \right] = 0.
    \nonumber                                                \\
  \end{align*}
  \normalsize
  Thus by definition of $Q^{k+1}$ in Eq.~\ref{eq:supp-objective}, we have
  \begin{align}
     & \Rightarrow    Q^{k+1}(s, \tau) = Q_{\theta}(s, \tau) \nonumber \\
     & = \frac{\pi^k}{\pi^k + \alpha_k
      \mu}\mathcal{B}^{k}_{\pi}Q^k(s, \tau) + \frac{\alpha_k\mu}{\pi^k + \alpha_k
      \mu}\gamma\gP_{\pi}^kQ^k(s, \tau) + \frac{\alpha_k\mu}{\pi^k + \alpha_k
      \mu}\mathbb{E}_{\tau'\sim\pi^{k}}r'.
  \end{align}
  Note that $\mathcal{B}^k_{\pi} = R + \gamma\gP^k_{\pi}$, and we can  further
  simplify $Q^{k+1}$ as
  \begin{equation}
    \label{eq:update_formula}
    Q^{k+1} = \mathcal{B}^k_{\pi}Q^k(s, \tau) - \frac{\alpha_k\mu}{\pi^k +
      \alpha_k\mu}\left[R(s, \tau) - \mathbb{E}_{\tau'\sim\pi^k}r'\right].
  \end{equation}
\end{proof}
Now, we only need to show Eq.~\ref{eq:update_formula} leads to
$||Q^{k+1}-Q^k||_{\infty}\rightarrow 0$. First, it can be shown that
$||\mathcal{B}^k_{\pi}Q^{k} - \mathcal{B}^{k-1}_{\pi}Q^{k-1}||_{\infty} \leq
  \gamma||Q^k - Q^{k-1}||_{\infty}$ following \cite{melo2001convergence}.
Therefore we have

\small
\begin{align}
  ||Q^{k+1} - Q^k||_{\infty}
   & \leq ||\mathcal{B}^k_{\pi}Q^{k} -
  \mathcal{B}^{k-1}_{\pi}Q^{k-1}||_{\infty} \nonumber                                         \\
   & \quad + ||\frac{\alpha_k\mu}{\pi^k + \alpha_k\mu}[R - \mathbb{E}_{\pi^k}r']
  - \frac{\alpha_{k-1}\mu}{\pi^{k-1} + \alpha_{k-1}\mu}[R -
  \mathbb{E}_{\pi^{k-1}}r']||_{\infty}\nonumber                                               \\
   & \leq\gamma||Q^k-Q^{k-1}||_{\infty} + 2R_{max} \left(||\frac{\alpha_{k-1}\mu}{\pi^{k-1} +
    \alpha_{k-1}\mu}||_{\infty} + ||\frac{\alpha_{k}\mu}{\pi^{k} +
  \alpha_{k}\mu}||_{\infty}\right).\nonumber                                                  \\
\end{align}
\normalsize
Using $ \alpha_k < \left(\frac{1}{\gamma^kC}
  -1\right)^{-1}\left(\frac{\pi^k}{\mu}\right)_{min} $, we have
\begin{align}
  ||Q^{k+1} - Q^k||_{\infty}
   & < \gamma||Q^k-Q^{k-1}||_{\infty} + 2R_{max} \left(\gamma^{k-1}C + \gamma^kC\right) \\
   & < \gamma \left[\gamma ||Q^{k-1}-Q^{k-2}||_{\infty} +
  2R_{max}\left(\gamma^{k-2}C + \gamma^{k-1}C\right)\right] \nonumber                   \\
   & \quad + 2R_{max} \left(\gamma^{k-1}C + \gamma^kC\right)                            \\
   & \cdots\nonumber                                                                    \\
   & <\gamma^{k}||Q^1-Q^0||_{\infty} + 2R_{max}C(1+\gamma)k\gamma^{k-1}.
\end{align}
Therefore, we have $$\lim_{k \rightarrow \infty}||Q^{k+1}-Q^k||_{\infty} = 0.$$
$\square$.

\begin{theorem}
  Under the same conditions as Lemma 1, our learning procedure converges to
  $Q^*$.
\end{theorem}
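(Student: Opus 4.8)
The plan is to bootstrap Theorem~2 from Lemma~1 in two moves: first upgrade Lemma~1's conclusion $\lim_{k\to\infty}\|Q^{k+1}-Q^k\|_\infty=0$ into genuine convergence of the sequence $\{Q^k\}$ to some limit $Q^\infty$, and then identify $Q^\infty$ as the unique fixed point $Q^*$ of the Bellman optimality operator. For the first move I would reuse the last inequality in the proof of Lemma~1, namely $\|Q^{k+1}-Q^k\|_\infty < \gamma^k\|Q^1-Q^0\|_\infty + 2R_{\max}C(1+\gamma)k\gamma^{k-1}$. Because $\gamma\in(0,1)$, both $\sum_k\gamma^k$ and $\sum_k k\gamma^{k-1}$ converge, so $\sum_k\|Q^{k+1}-Q^k\|_\infty<\infty$; hence $\{Q^k\}$ is Cauchy in the complete space of bounded $Q$-functions under $\|\cdot\|_\infty$ and converges to a well-defined $Q^\infty$.

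For the second move I would start from the explicit update formula Eq.~\ref{eq:update_formula},
\[
Q^{k+1} = \mathcal{B}^k_\pi Q^k - \frac{\alpha_k\mu}{\pi^k+\alpha_k\mu}\bigl[R - \mathbb{E}_{\tau'\sim\pi^k}r'\bigr],
\]
and argue that the second (bias) term vanishes. The hypothesis $\alpha_k < \left(\tfrac{1}{\gamma^k C}-1\right)^{-1}\left(\tfrac{\pi^k}{\mu}\right)_{min}$ together with $\gamma^k\to0$ forces the first factor to $0$, while $\left(\tfrac{\pi^k}{\mu}\right)_{min}$ stays bounded (it is a ratio of distributions over the finite trajectory set $\mathcal{T}$), so $\alpha_k\to0$. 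Since $\mu$ is the fixed uniform distribution and the exploration floor keeps $\pi^k$ bounded away from zero, the coefficient $\tfrac{\alpha_k\mu}{\pi^k+\alpha_k\mu}\to0$ uniformly; as $|R-\mathbb{E}_{\pi^k}r'|\le 2R_{\max}$, the entire bias term tends to $0$ in $\|\cdot\|_\infty$.

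The concluding step uses that the policy improvement analyzed here is the greedy ($Q$-learning) update $\pi^k=\argmax_\tau Q^k$, under which the policy-coupled operator collapses to the Bellman optimality operator: $\mathcal{B}^k_\pi Q^k(s,\tau)=R+\gamma\,\mathbb{E}_{s'}[\max_{\tau'}Q^k(s',\tau')]=:\mathcal{B}^* Q^k$. Passing to the limit in Eq.~\ref{eq:update_formula}, using $Q^k\to Q^\infty$, the vanishing of the bias term, and the fact that $\mathcal{B}^*$ is a $\gamma$-contraction (hence Lipschitz-continuous), I obtain $Q^\infty=\mathcal{B}^* Q^\infty$. By the Banach fixed-point theorem $\mathcal{B}^*$ has a unique fixed point, which is exactly $Q^*$, so $Q^\infty=Q^*$.

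The main obstacle I anticipate is reconciling the exploration term with exact convergence to $Q^*$. The bias term vanishes uniformly only if $\pi^k$ is bounded below, which the $\epsilon$-greedy floor provides; but with a fixed $\epsilon>0$ the operator $\mathcal{B}^k_\pi$ is the $\epsilon$-greedy Bellman operator, whose fixed point is $Q^{\pi_\epsilon}$ rather than $Q^*$. I would resolve this by invoking the paper's remark that the procedure is analyzed with the pure greedy $Q$-learning improvement, or by letting $\epsilon_k\to0$ at a rate slow enough to preserve $\alpha_k/\epsilon_k\to0$ (so the bias still vanishes) yet fast enough that $\mathcal{B}^k_\pi Q^k\to\mathcal{B}^* Q^\infty$. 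Verifying this interchange of limits, i.e. that $\mathcal{B}^k_\pi Q^k$ genuinely converges to $\mathcal{B}^* Q^\infty$ despite the nonsmooth $\max$, is the delicate part; it follows once the contraction/continuity of $\mathcal{B}^*$ is combined with $Q^k\to Q^\infty$.
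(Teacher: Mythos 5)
Your proof is correct and reaches the same destination, but the mechanics differ from the paper's in two ways worth noting. The paper never explicitly establishes that $\{Q^k\}$ converges: it introduces the resolvent-form quantity $\hat{Q}^{k+1} = (I-\gamma\gP_{\pi}^k)^{-1}\bigl[\tfrac{\pi^k}{\pi^k+\alpha_k\mu}R + \tfrac{\alpha_k\mu}{\pi^k+\alpha_k\mu}\E_{\pi^k}r'\bigr]$, shows $\|Q^{k+1}-\hat{Q}^{k+1}\|_\infty\to 0$ by bounding $(I-\gamma\gP_{\pi}^k)$ from below, and then passes to the limit in $\hat{Q}^{k+1}$ as $\alpha_k\to 0$ to land on the Bellman fixed point. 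Your first move --- upgrading $\|Q^{k+1}-Q^k\|_\infty\to 0$ to genuine convergence via summability of the bound $\gamma^k\|Q^1-Q^0\|_\infty + 2R_{\max}C(1+\gamma)k\gamma^{k-1}$ and hence Cauchyness --- is a step the paper implicitly assumes but does not carry out, and it is genuinely needed (vanishing successive differences alone do not give convergence). Your second move then works directly with the one-step update $Q^{k+1}=\mathcal{B}^k_{\pi}Q^k - \tfrac{\alpha_k\mu}{\pi^k+\alpha_k\mu}[R-\E_{\pi^k}r']$ and passes to the limit using continuity of the contraction, avoiding the resolvent detour entirely; note the hypothesis on $\alpha_k$ already gives $\tfrac{\alpha_k\mu}{\pi^k+\alpha_k\mu}\le \alpha_k/(\pi^k/\mu)_{\min} < (\tfrac{1}{\gamma^kC}-1)^{-1}\to 0$ directly, so you do not separately need $\pi^k$ bounded below for this step. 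Your closing caveat about the $\epsilon$-greedy improvement is also well taken: the paper's improvement step retains a fixed $\epsilon U(a)$ term yet its proof silently identifies the limit with the fixed point of the \emph{greedy} Bellman operator, so your proposed resolutions (pure greedy improvement, or $\epsilon_k\to 0$) address a gap the paper leaves open rather than one in your own argument.
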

\begin{proof}
  To show the sequence of $Q^k$ converges to $Q^*$, we first show that $Q^{k+1}$
  is sufficiently close to the following value $\hat{Q}^{k+1}$ when $k$ is
  large,
  \begin{align}
    Q^{k+1} \rightarrow \hat{Q}^{k+1} \coloneqq \left( I - \gamma
    \gP_{\pi}^k\right)^{-1} \left[
      \frac{\pi^k}{\pi^k + \alpha_k\mu}R + \frac{\alpha_k\mu}{\pi^k +
        \alpha_k\mu}\mathbb{E}_{\pi^k}r'
      \right].
  \end{align}
  To see this, we take a subtraction between $(I - \gamma\gP_{\pi}^k)Q^{k+1}$
  and $(I-\gamma\gP_{\pi}^k)\hat{Q}^{k+1}$. We have,
  \begin{align}
    ||\left( I - \gamma \gP_{\pi}^k\right) \left(Q^{k+1} - \hat{Q}^{k+1}\right) ||_{\infty}
     & = ||\gamma \gP_{\pi}^k Q^k - \gamma \gP_{\pi}^k Q^{k+1} ||_{\infty}\nonumber \\
     & = \gamma ||\gP_{\pi}^k \left(Q^k - Q^{k+1}\right) ||_{\infty}. \nonumber     \\
  \end{align}
  Note that $\gP_{\pi}^k$ is the transition matrix coupled with policy $\pi$.
  This means that for arbitrary matrix $A$, $||\gP_{\pi} A||_{\infty} \leq
    ||A||_{\infty}$. Therefore, we have
  \begin{equation}
    ||\left( I - \gamma \gP_{\pi}^k\right) \left(Q^{k+1} - \hat{Q}^{k+1}\right) ||_{\infty}
    \leq \gamma ||\left(Q^k - Q^{k+1}\right) ||_{\infty}\rightarrow 0.
  \end{equation}
  Besides, it is also easy to see that
  \small
  \begin{align}
    ||\left( I - \gamma \gP_{\pi}^k\right) \left(Q^{k+1} - \hat{Q}^{k+1}\right) ||_{\infty}
     & =
    ||\left(Q^{k+1} - \hat{Q}^{k+1}\right) - \gamma \gP_{\pi}^k\left(Q^{k+1} - \hat{Q}^{k+1}\right) ||_{\infty}
    \nonumber                                                                 \\
     & \geq
    ||\left(Q^{k+1} - \hat{Q}^{k+1}\right) ||_{\infty} -
    ||\gamma \gP_{\pi}^k\left(Q^{k+1} - \hat{Q}^{k+1}\right) ||_{\infty}
    \nonumber                                                                 \\
     & \geq \left(1 - ||\gamma\gP_{\pi}^k||_{\infty}\right) ||\left(Q^{k+1} -
    \hat{Q}^{k+1}\right) ||_{\infty}.
  \end{align}
  \normalsize
  Again, since $\gP_{\pi}^k$ is the transition probability matrix, we know $(1 -
    ||\gamma\gP_{\pi}^k||_{\infty}) > 0$. Hence, we have
  \begin{align}
     & \left(1 - ||\gamma\gP_{\pi}^k||_{\infty}\right) ||\left(Q^{k+1} -
    \hat{Q}^{k+1}\right) ||_{\infty}
    \leq \gamma ||\left(Q^k - Q^{k+1}\right) ||_{\infty}\rightarrow 0.
    \nonumber                                                            \\
     & \Rightarrow
    Q^{k+1} \rightarrow \hat{Q}^{k+1}.
  \end{align}
  When $k \rightarrow \infty$, given this fact and $\alpha_k \rightarrow 0$, we
  have
  $$ Q^{\infty} = \left(I - \gamma \gP^{\infty}\right)R. $$ Note that this is
  exactly the fixed point of the standard Bellman operator, \ie, $Q^* =
    \mathcal{B}Q^* = R + \gamma \gP^*Q^*$. Therefore, we know $Q^{\infty} = Q^*$.
  $\square$.

\end{proof}

\section{Benchmark Dataset}
\label{sec:supp-benchmark}
This section provides additional details about the free-flow and targeted
scenarios we use for our benchmark datasets, including how we generate and split
scenarios into train, validation and test sets.

\subsubsection{Free-flow Scenarios:}
Our free-flow dataset aims to model nominal traffic conditions, and consists of
7 scenario types. Differences in these scenario types include having more or
less aggressive actors, actors making fewer lane changes, a larger proportion of
large vehicles (e.g., trucks), faster actors, and larger variations in actor
speed. Each scenario type is defined by specifying a distribution over the
ego-vehicle's initial state (e.g., speed, location), actor speeds, actor classes
(e.g., car, bus, truck), actor IDM~\cite{treiber2000congested} profile (e.g.,
aggressive, cautious), and actor MOBIL~\cite{kesting2007general} profile (e.g.,
selfish, altruistic). Additional parameters configure actor density and the map
(e.g., map layout, number of lanes). Sampling a free-flow scenario amounts to
first uniformly sampling a scenario type and then sampling the scenario-defining
parameters from the aforementioned distributions.

\subsubsection{Targeted Scenarios:}
Our targeted scenario set consists of 24 distinct scenario types covering 3
common ego-routing intentions for highway driving. Scenarios corresponding to
different ego intentions have different success criteria:
\begin{enumerate}
  \item Lane Follow: Ego-vehicle must reach a goal point in the lane without
        deviating from the lane.
  \item Lane Change: Ego-vehicle must make a lane change towards a target lane
        and then reach a goal point on the target lane.
  \item Lane Merge: Ego-vehicle is driving on a lane that is ending and must
        merge into another lane.
\end{enumerate}
Besides, any collision or speed limit violation happens during the scenario also
accounts as a failure. To generate diverse traffic scenarios, the 3
aforementioned scenes can be combined with zero or more actors, where each actor
can be scripted with 1 of 5 behavior patterns (braking, accelerating, blocking
lane, cut into lane, negotiating lane change). A concrete example of a scenario
type is a lane follow scenario where an actor is cutting in front of the
ego-vehicle from another lane. Through varying the ego-routing intention, actor
behaviors, and actor placements, we designed 24 scenario types for our targeted
scenario set, which aim to cover the space of traffic interactions that would be
encountered during driving.

Each scenario type is parameterized by a set of behavioral and map parameters,
and an endless amount of scenario variations can be generated through varying
these parameters. Behavioral parameters control the details of the interaction
between the ego-vehicle and other actors, such as initial relative speeds,
initial relative distances, and how an actor performs a maneuver (e.g.,
aggressiveness of cut-in). Map parameters control the layout of the map such as
the curvature of the roads, the geometry of a merging lane, and the number of
lanes.

Note that while the process manually designing scenarios require human effort
(\eg compared to learned or adversarial-based approaches,) we'd like to
highlight that such a creation process encodes prior knowledge and makes the
created scenarios more semantically meaningful, as each type of scenario targets
a specific capability or requirement of autonomous driving. This ensures we have
a good coverage of real-world traffic situations. Our scenarios can also adapt
to different AV policies since we use intelligent actors and smart triggers
which can adjust automatically depending on the AV's maneuvers.

\subsubsection{Creating Dataset Splits:}
As described in the benchmark section of the main text (Section~\ref{sec:benchmark}), we use
the all-pairs methodology to construct our test set for targeted scenarios.
While enumerating all possible parameter combinations thoroughly covers the
scenario space, it is too expensive as the number of combinations grows
exponentially with the number of configurable parameters. All-pairs produces a
much smaller set by carefully selecting the combinations of parameter
variations~\cite{microsoft-pict}, \ie a set that ensures all possible
combinations of variations for any pair of parameters are presented. The
assumption behind this approach is that many interesting behaviors can be
triggered by changing a single parameter or a pair of parameters. As a result, a
test set with this property provides good coverage of the input space.

However, the standard all-pairs approach assumes that all parameters are
discrete, whereas many of our scenario parameters are continuous. To this end,
we partition each of our continuous scenario parameters into non-overlapping
buckets (a contiguous range of values). For example, the time an actor takes to
cut in front of the ego-vehicle is a continuous parameter. We can bucket the
values for this parameter into $[1, 2]$ seconds, $[3, 4]$ seconds and $[5,6]$
seconds, changing the semantics of the cut-in behavior from aggressive to mild.
This essentially discretizes continuous variables into coarse-grained discrete
variables, upon which the all-pairs approach can be applied. Once the discrete
choice of which bucket to use has been made for a scenario's continuous
parameters, we generate the exact value of each such parameter by uniform
sampling within the selected bucket.

\section{Metrics Breakdown}
\label{sec:supp-quan}
In this section we show the metrics in Table~\ref{table:sota} of the main paper broken
down by scenario types to provide more fine-grained analysis. Specifically, we
categorize scenarios in our targeted set into normal, negotiating and reacting
scenarios. Normal scenarios are those nominal scenarios such as lane following
with normal-behaving actors (driving in their lane without any extra maneuvers)
in the scene. Negotiating scenarios require negotiations with other actors, such
as squeeze-in lane changes and merges.  Finally, reacting scenarios are those
where the ego-vehicle must react to another actor, \eg an actor cutting in.

From Figure~\ref{fig:metrics-scenario-family}, we see that most methods are able
to achieve low collision rate and satisfactory progress on the normal scenarios.
However, for more complex negotiating and reacting scenarios, baseline methods
have difficulty exhibiting safe and efficient driving maneuvers. Specifically,
we can see that control signal based methods have very high collision rate on
difficult scenarios, possibly due to the lack of long-term reasoning. Second,
on-policy RL techniques such as PPO and A3C cannot achieve good performance.
Note that although the policy learned by A3C+T has low collision, it is too
conservative and does not progress very much compared to other methods. Finally,
combining our trajectory based formulation and off-policy learning achieves
better performance, \eg, RAINBOW+T, and our TRAVL is even better with the
proposed efficient learning paradigm.

\begin{figure}[t]
  \centering
  \includegraphics[width=\textwidth]{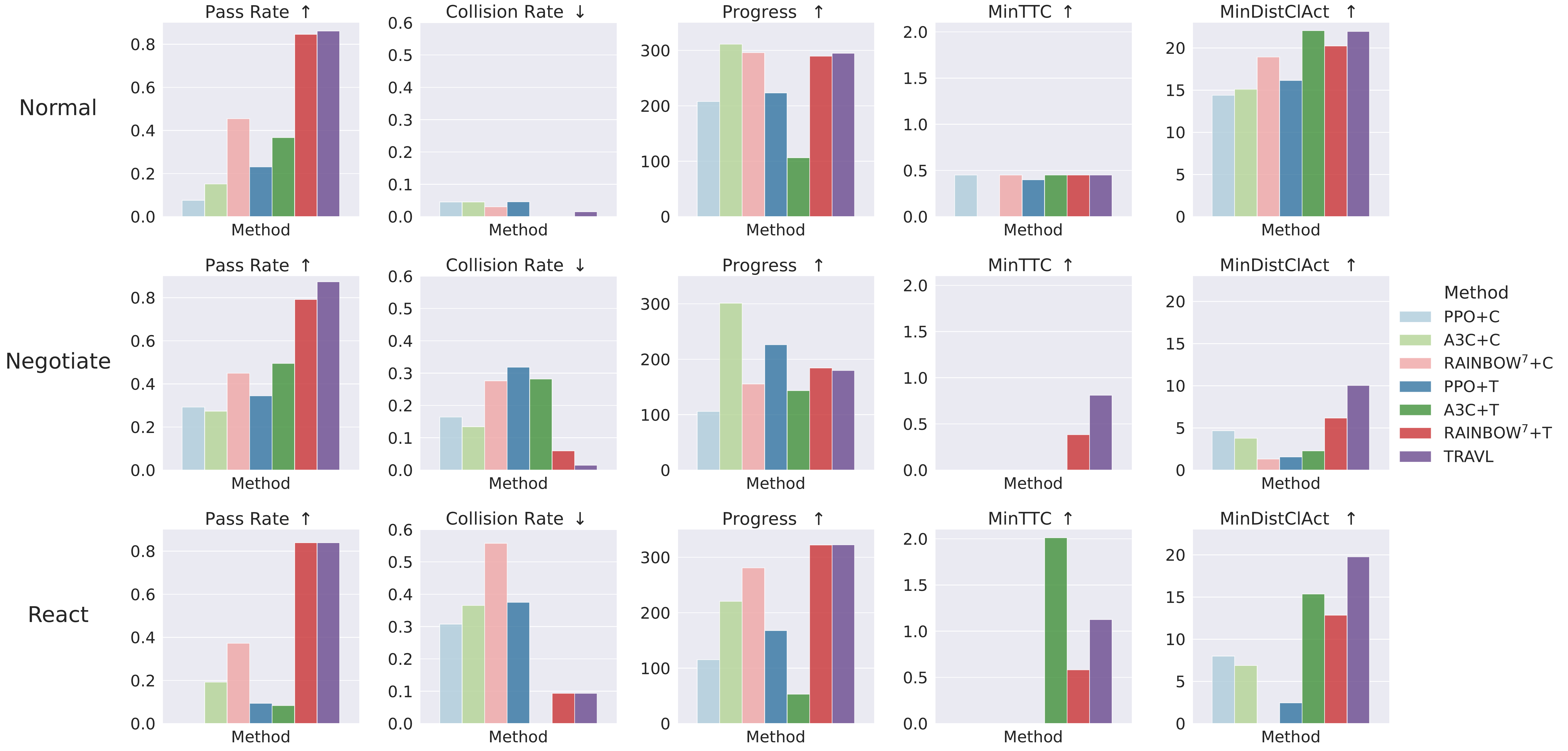}
  \caption{Metrics broken down by scenario types. Top row shows metrics for
    Normal scenarios. Middle row shows metrics for Negotiating scenarios. Bottom
    row shows metrics for Reacting scenarios. We see that while control based
    methods can avoid collision for Normal scenarios, Reacting scenarios prove more
    challenging. }
  \label{fig:metrics-scenario-family}
\end{figure}

\section{Qualitative Results}
\label{sec:supp-qual}
In this section, we show several qualitative results of our learned TRAVL agent
navigating in closed loop simulation. The agent is controlling the center pink
vehicle. Beside each actor is the velocity in $m/s$. Below velocity,
acceleration in $m/s^2$ is shown.

In Figure~\ref{fig:free-flow-ex}, the agent is driving in scenario where it must
merge onto the highway while taking into account other actors in the scene. We
see that our agent has learned to drive in complex free-flow traffic situations
which mimic the real world.

In Figure~\ref{fig:cut-in-ex}, we see our agent in a targeted scenario which
tests the ability to react to actors cutting in. We see our agent performs the
correct maneuver by reacting quickly and slowing down.

In Figure~\ref{fig:lane-change-ex}, the agent is tasked to squeeze between the
two actors. We see the agent has learned to slow down in order to make this lane
change.

In Figure~\ref{fig:merge-ex}, this scenario stress tests the agent by
initializing it at a very low velocity and requiring it to merge into a crowded
lane. We see the agent has learned to speed up to in order to merge into the
traffic.

In Figure~\ref{fig:lane-change-fail-ex}, we see a failure case of our model. In
this lane change scenario, we see a fast-travelling actor decelerating. The
ego-vehicle mistakenly initiates a lane change in front of that actor when that
actor is still going much too fast. Once our agent realizes that the actor
cannot slow down in time and that this will cause a collision, it makes a last
minute adjustment to avoid collision. While collision is avoided, this is still
an unsafe behavior.

\begin{figure}[H]
  \begin{subfigure}{0.32\textwidth}
    \centering
    \includegraphics[width=\linewidth,trim={24cm 10cm 24cm 10cm},clip]{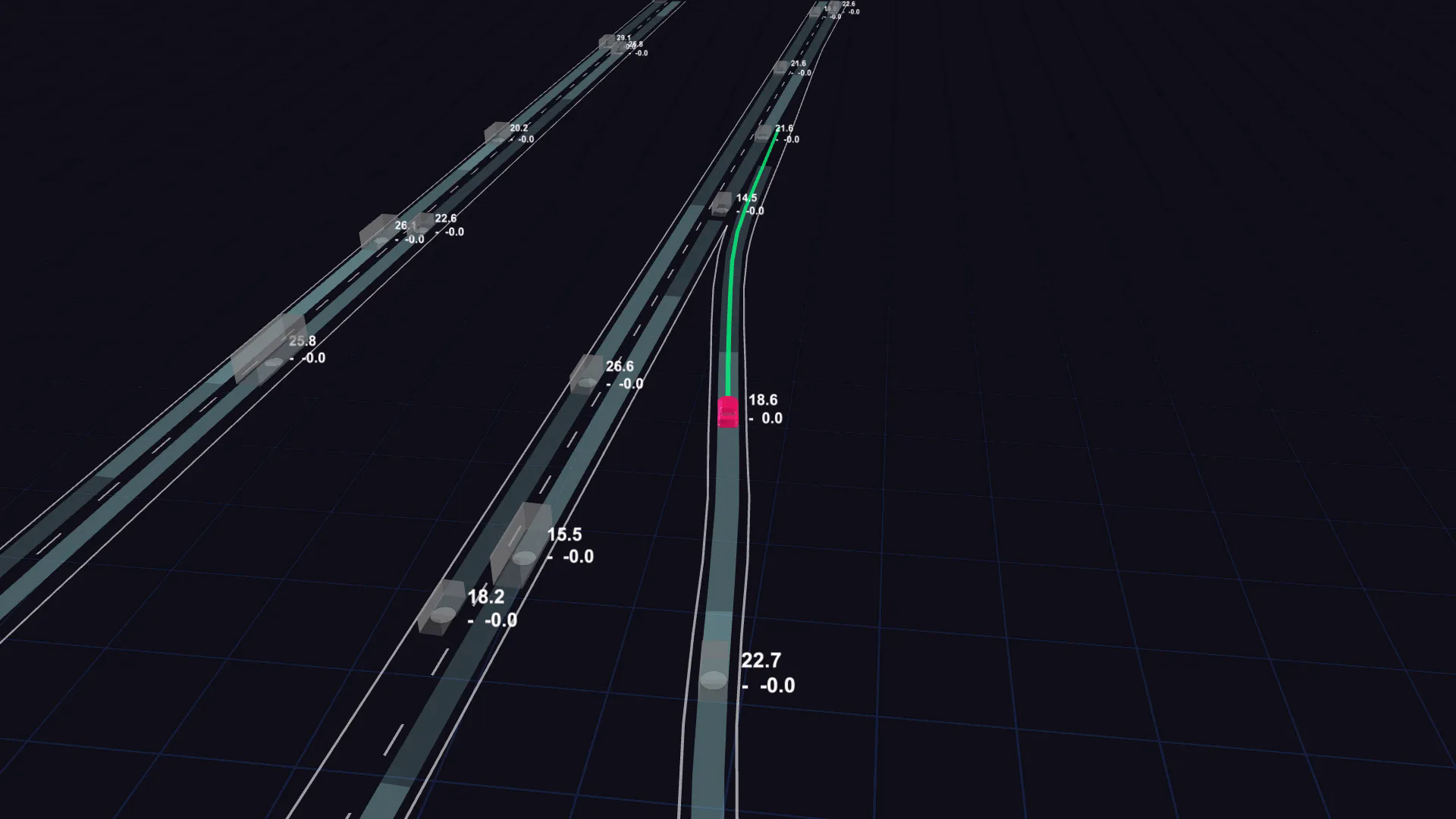}
  \end{subfigure}
  \begin{subfigure}{0.32\textwidth}\
    \centering
    \includegraphics[width=\linewidth,trim={24cm 10cm 24cm 10cm},clip]{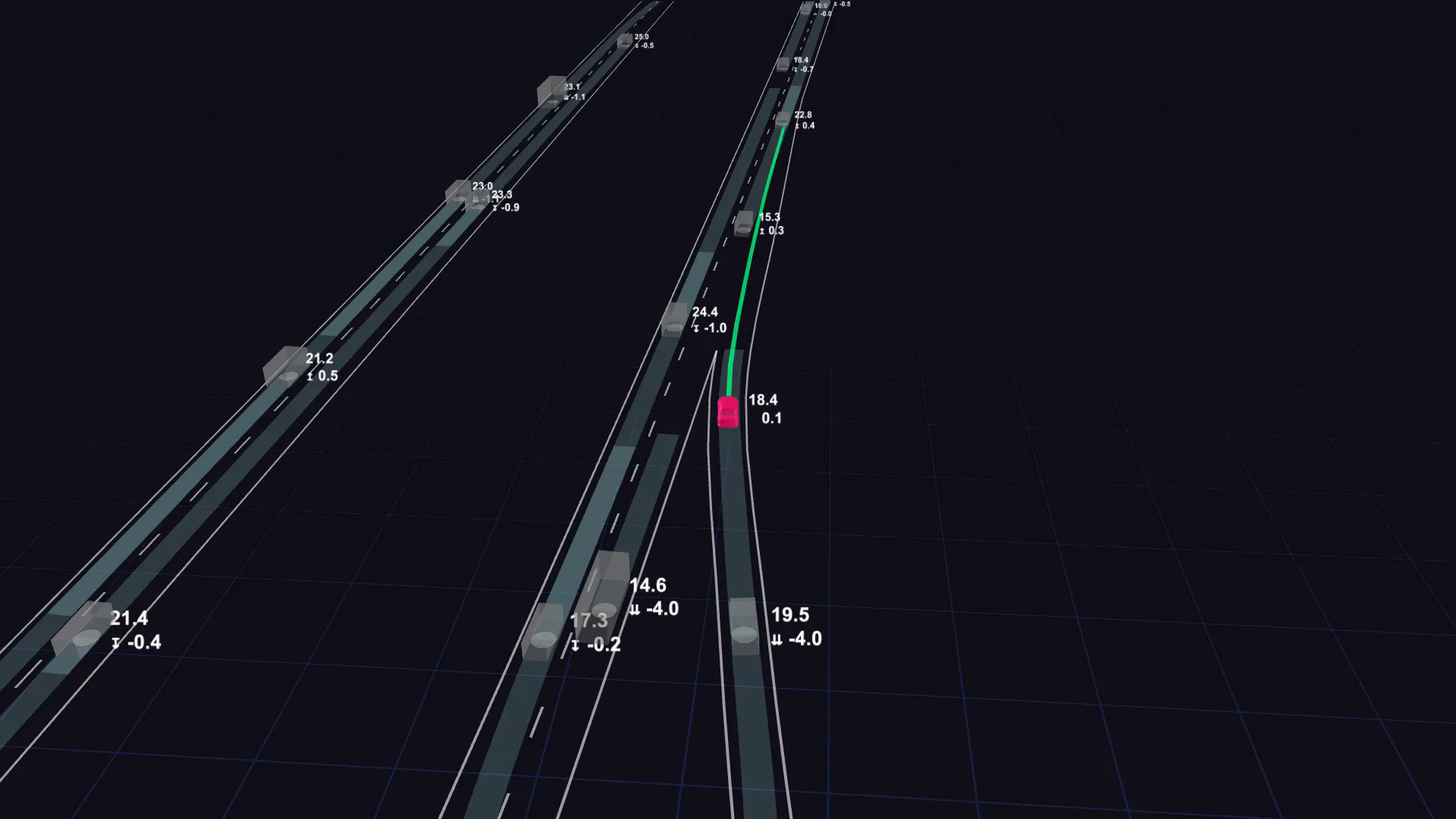}
  \end{subfigure}
  \begin{subfigure}{0.32\textwidth}
    \centering
    \includegraphics[width=\linewidth,trim={24cm 10cm 24cm 10cm},clip]{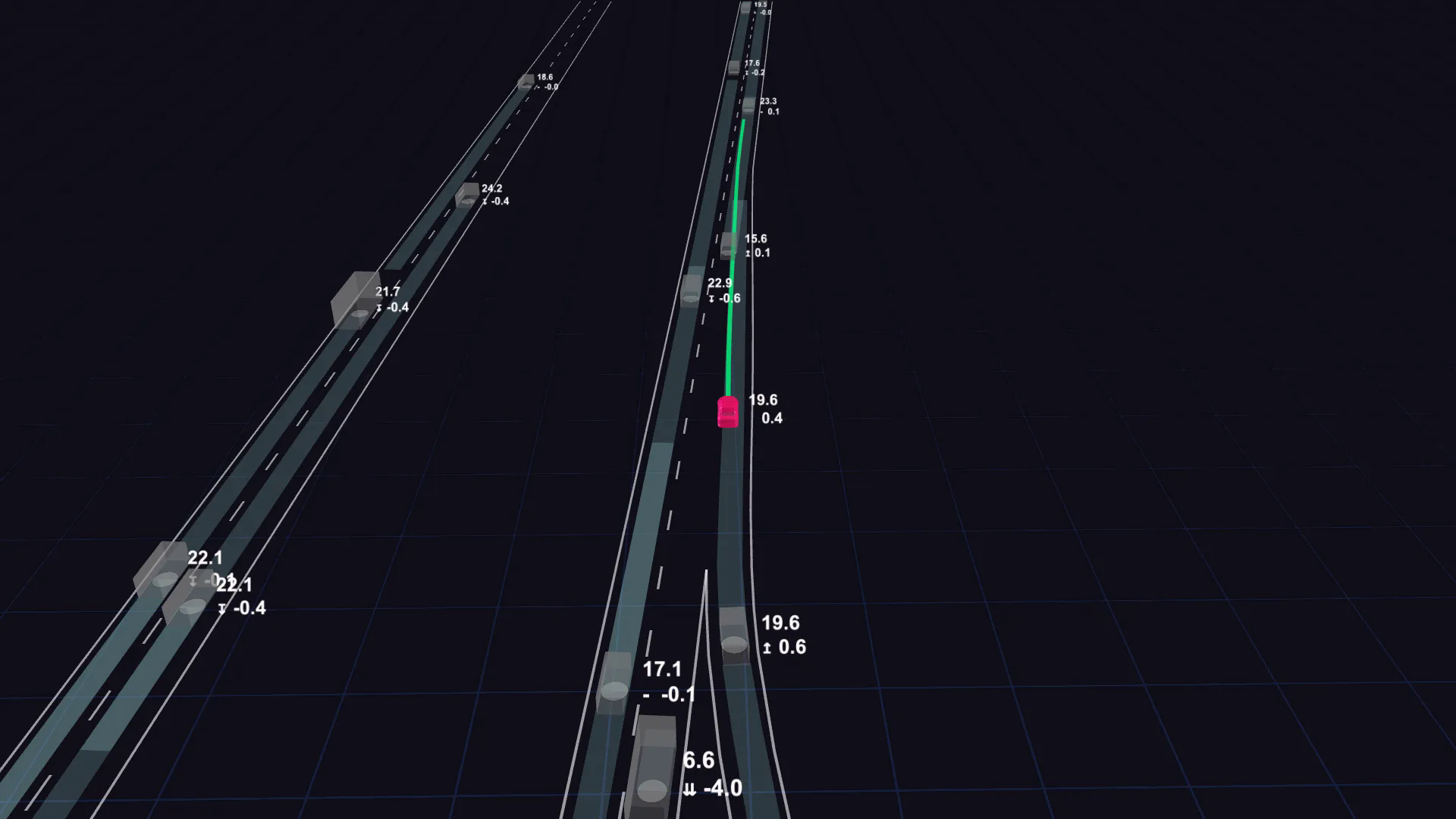}
  \end{subfigure}
  \caption{Our agent successfully navigates a free-flow scenario where it must merge.}
  \label{fig:free-flow-ex}
\end{figure}

\begin{figure}[H]
  \begin{subfigure}{0.32\textwidth}
    \centering
    \includegraphics[width=\linewidth,trim={24cm 10cm 24cm 10cm},clip]{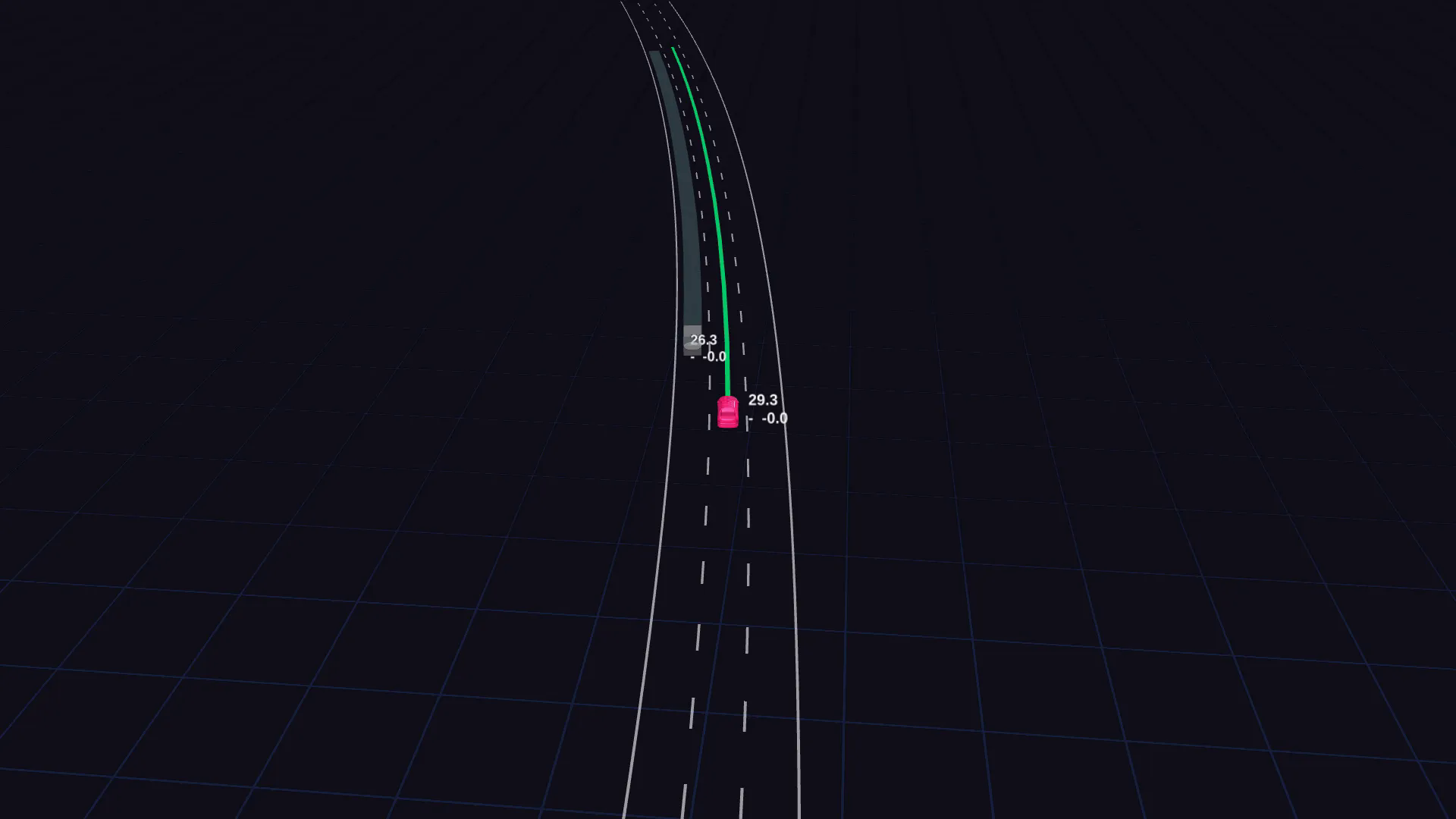}
  \end{subfigure}
  \begin{subfigure}{0.32\textwidth}\
    \centering
    \includegraphics[width=\linewidth,trim={24cm 10cm 24cm 10cm},clip]{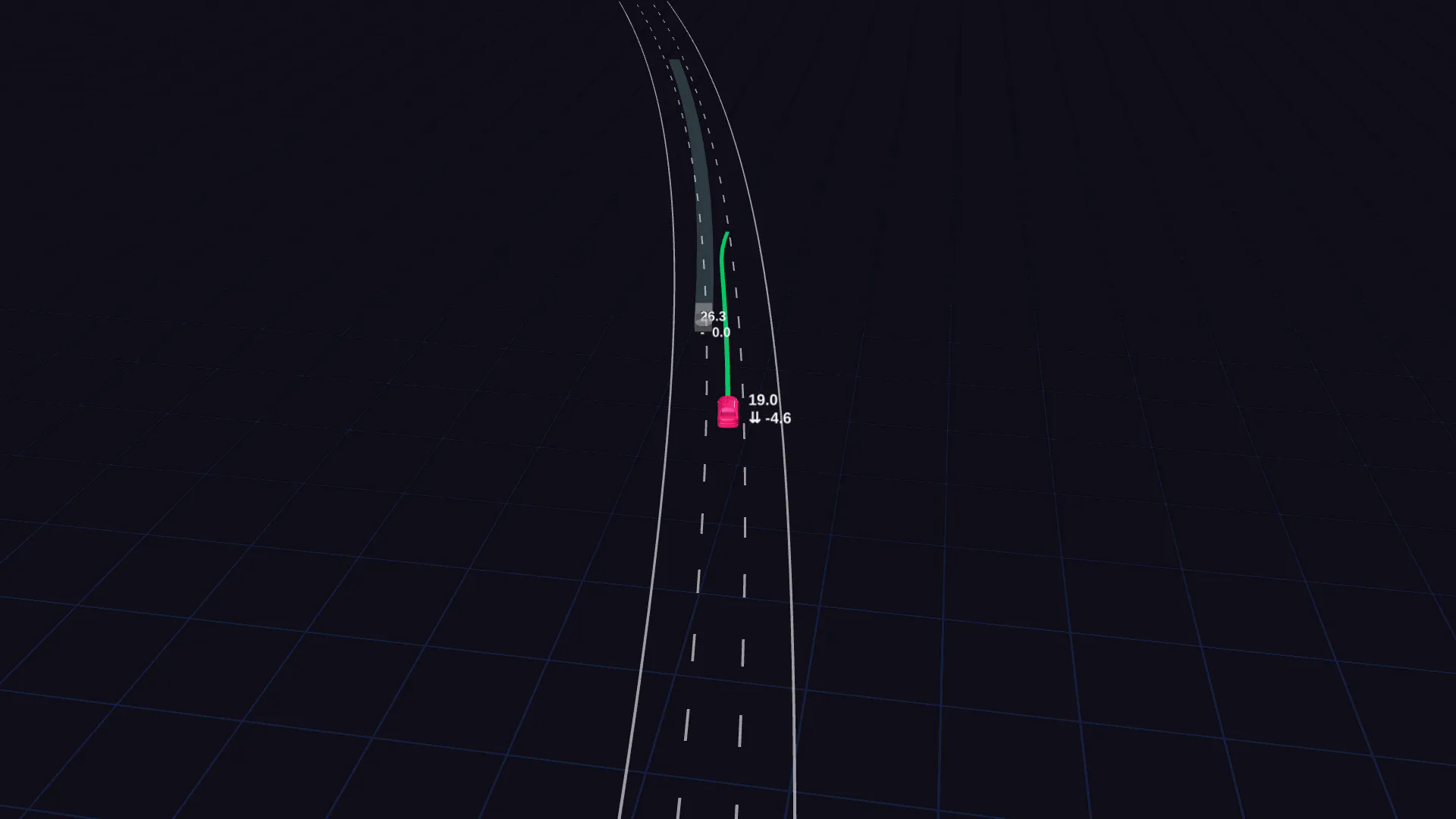}
  \end{subfigure}
  \begin{subfigure}{0.32\textwidth}
    \centering
    \includegraphics[width=\linewidth,trim={24cm 10cm 24cm 10cm},clip]{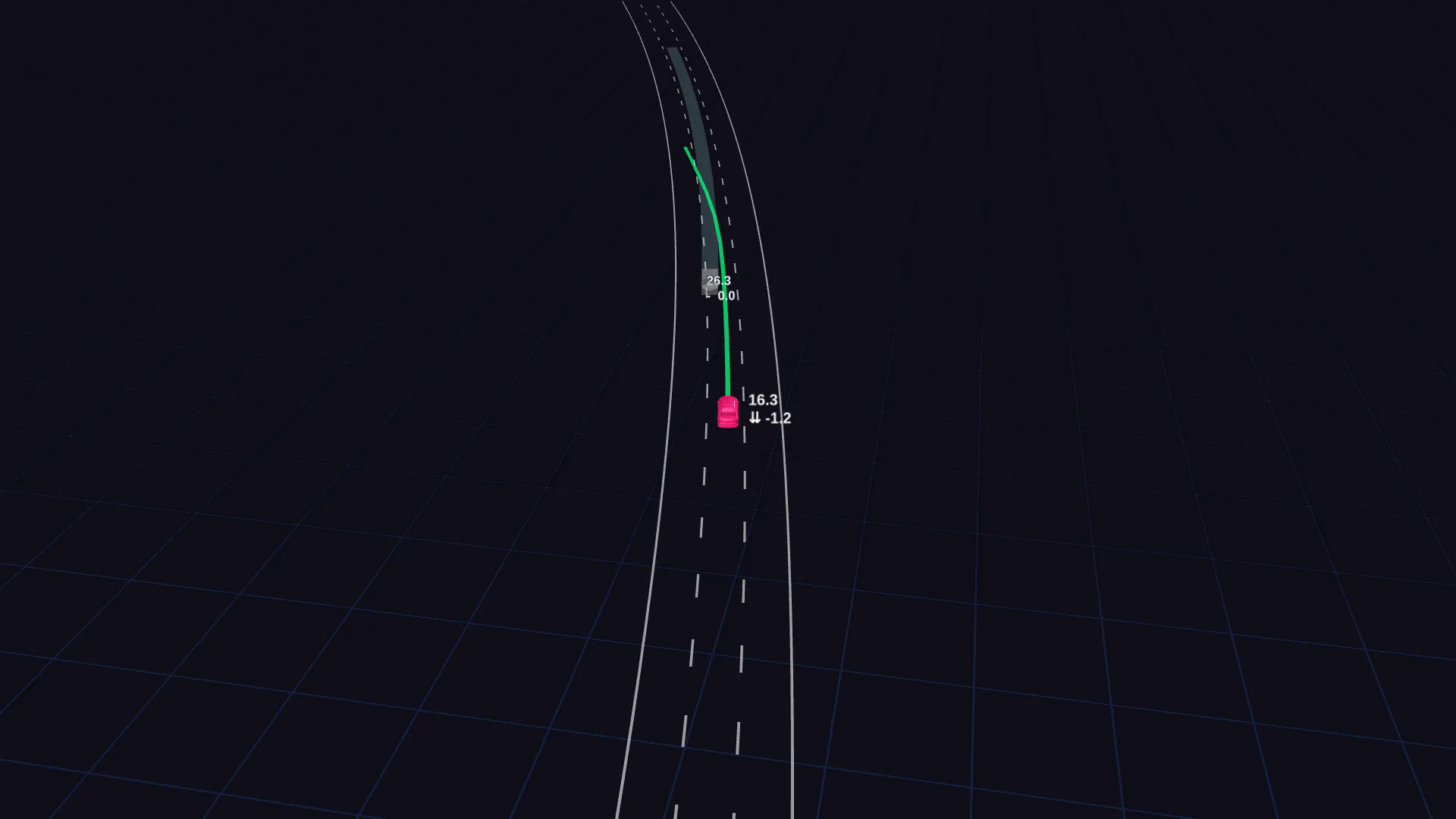}
  \end{subfigure}
  \caption{Our agent reacts to an actor cutting in during a targeted scenario. }
  \label{fig:cut-in-ex}
\end{figure}

\begin{figure}[H]
  \begin{subfigure}{0.32\textwidth}
    \centering
    \includegraphics[width=\linewidth,trim={24cm 10cm 24cm 10cm},clip]{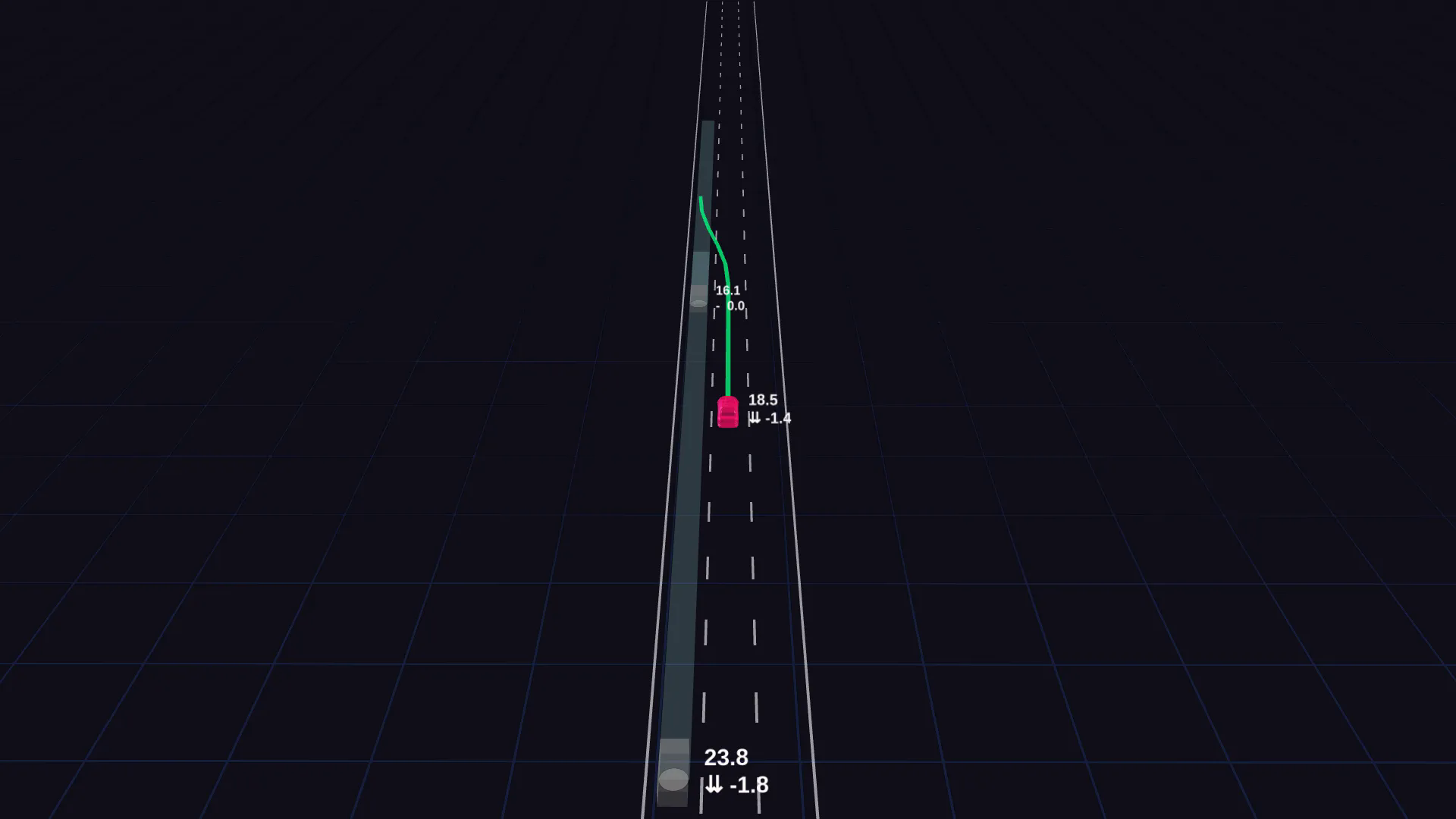}
  \end{subfigure}
  \begin{subfigure}{0.32\textwidth}\
    \centering
    \includegraphics[width=\linewidth,trim={24cm 10cm 24cm 10cm},clip]{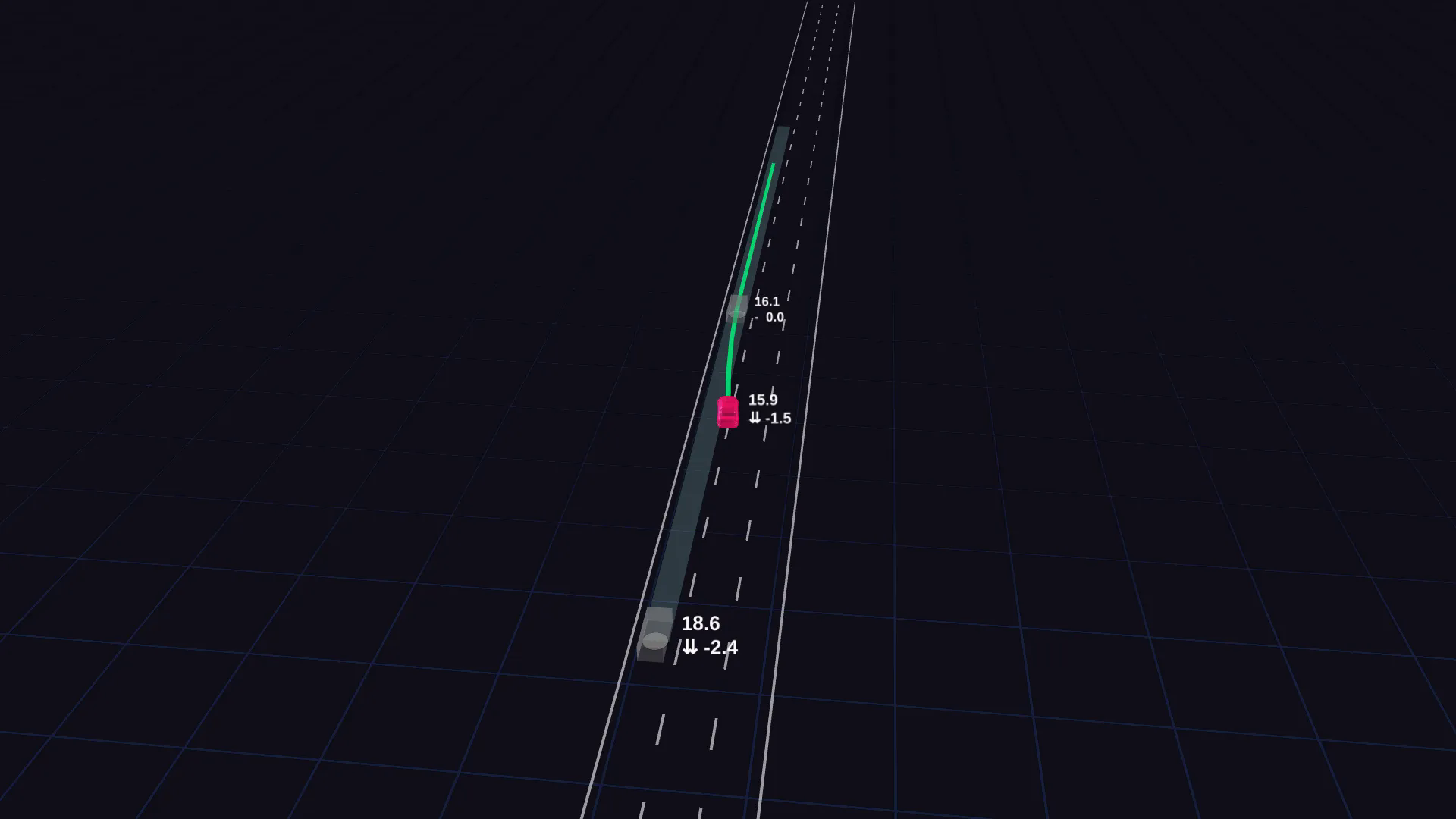}
  \end{subfigure}
  \begin{subfigure}{0.32\textwidth}
    \centering
    \includegraphics[width=\linewidth,trim={24cm 10cm 24cm 10cm},clip]{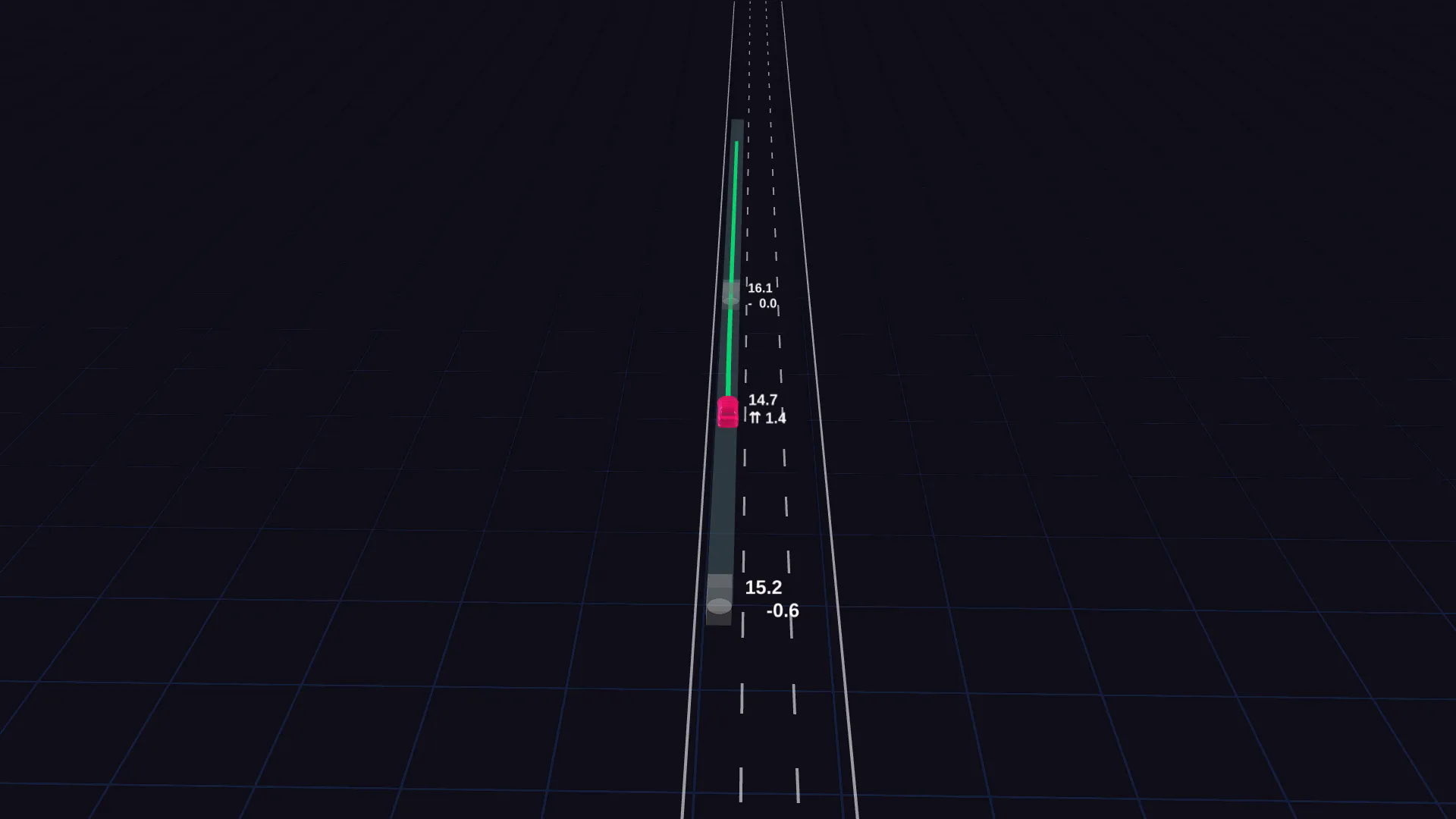}
  \end{subfigure}
  \caption{Our agent slows down in order to lane change between two actors.}
  \label{fig:lane-change-ex}
\end{figure}

\begin{figure}[H]
  \begin{subfigure}{0.32\textwidth}
    \centering
    \includegraphics[width=\linewidth,trim={24cm 10cm 24cm 10cm},clip]{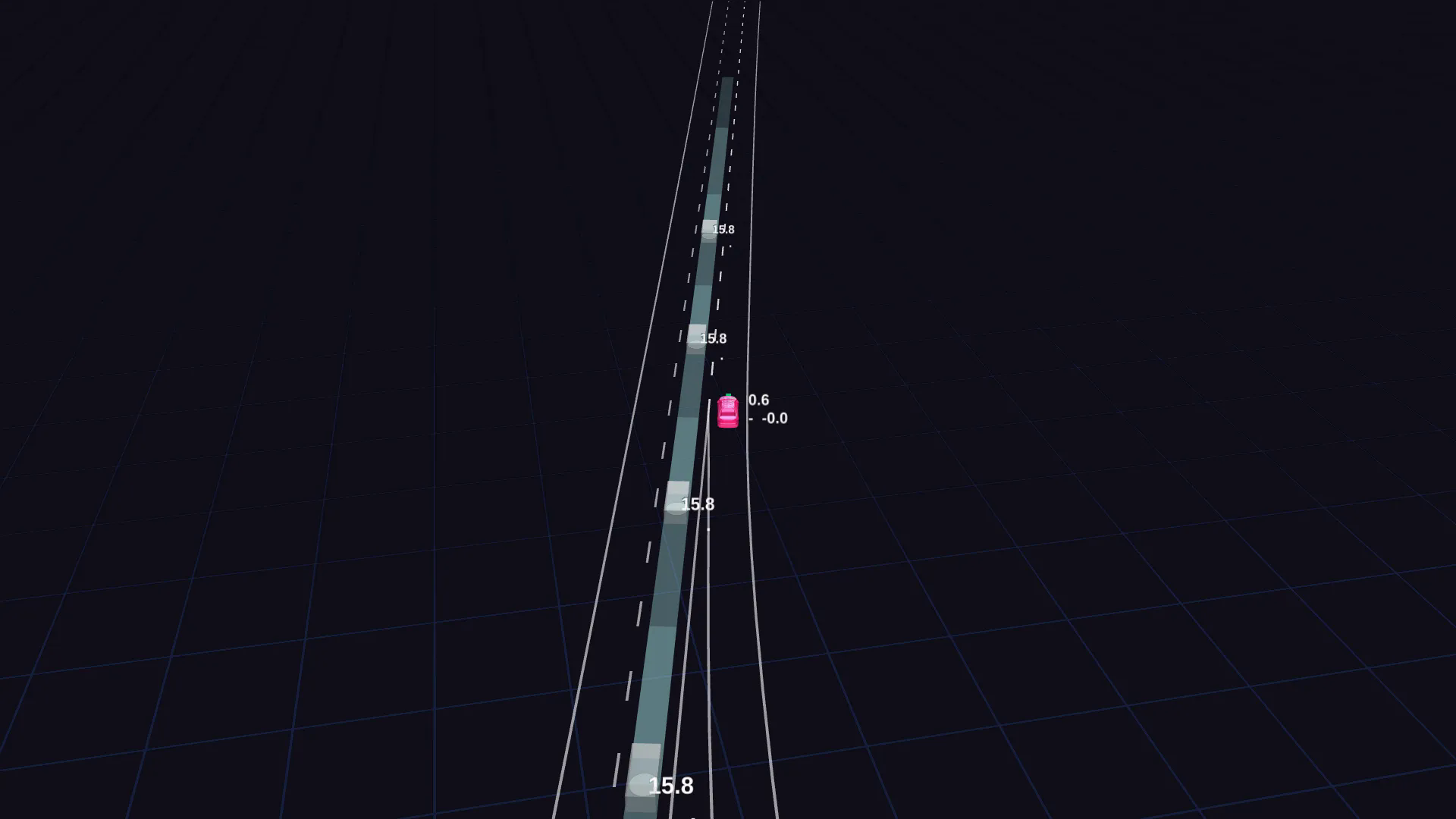}
  \end{subfigure}
  \begin{subfigure}{0.32\textwidth}\
    \centering
    \includegraphics[width=\linewidth,trim={24cm 10cm 24cm 10cm},clip]{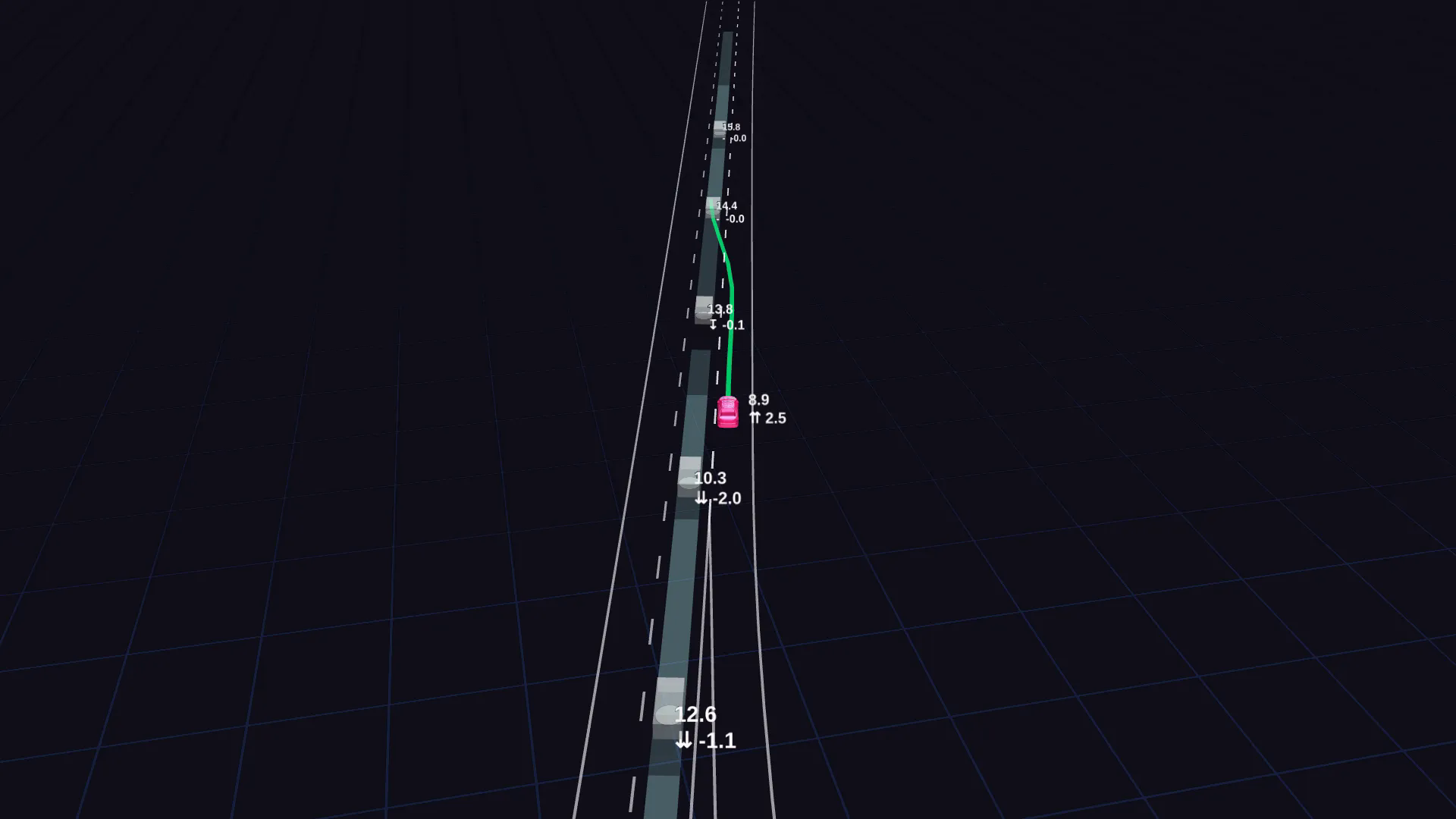}
  \end{subfigure}
  \begin{subfigure}{0.32\textwidth}
    \centering
    \includegraphics[width=\linewidth,trim={24cm 10cm 24cm 10cm},clip]{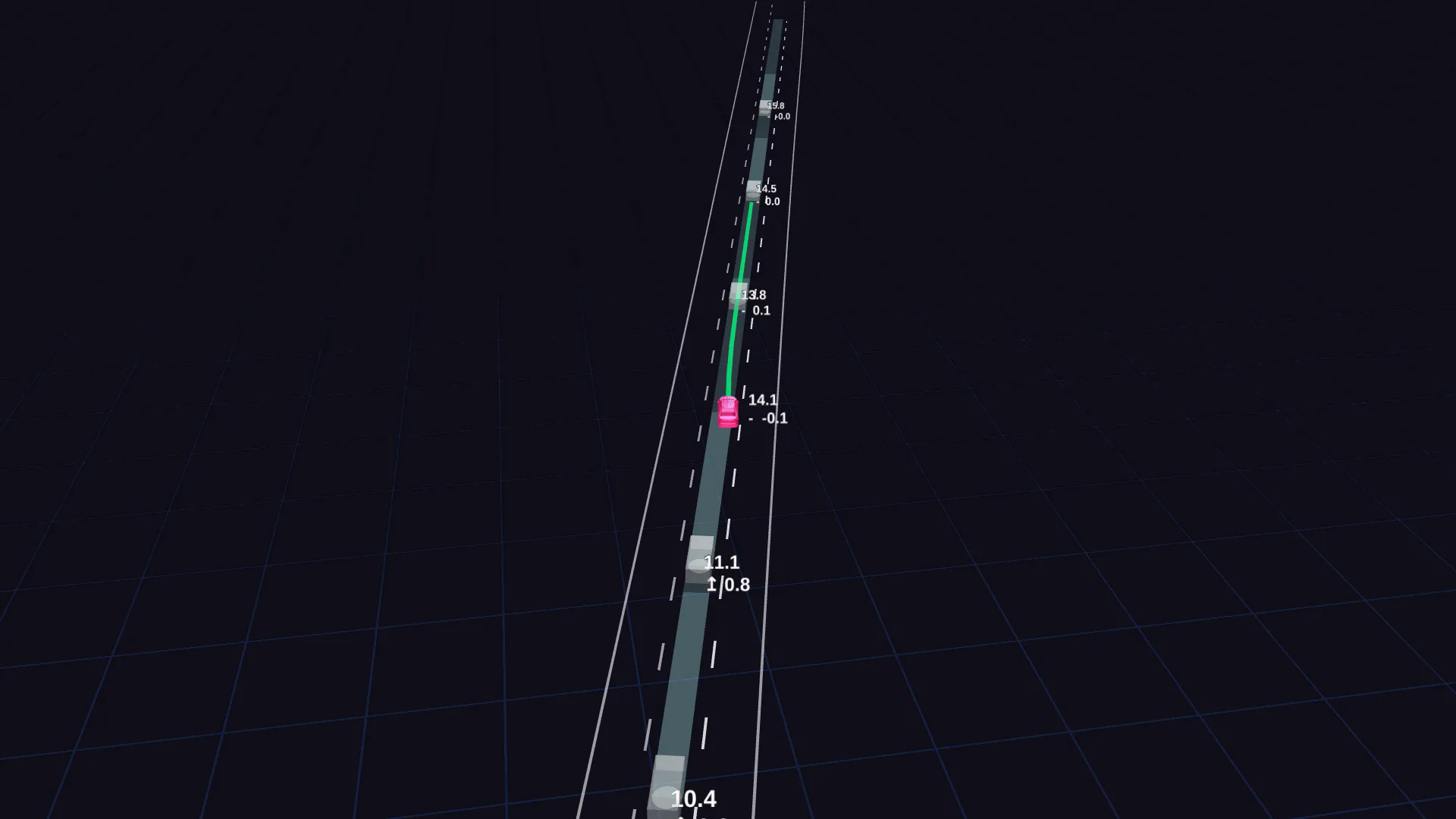}
  \end{subfigure}
  \caption{Our agent is initiated with very low velocity. It has learned that it
    must speed up in order to merge into traffic.}
  \label{fig:merge-ex}
\end{figure}

\begin{figure}[H]
  \begin{subfigure}{0.32\textwidth}
    \centering
    \includegraphics[width=\linewidth,trim={24cm 10cm 24cm 10cm},clip]{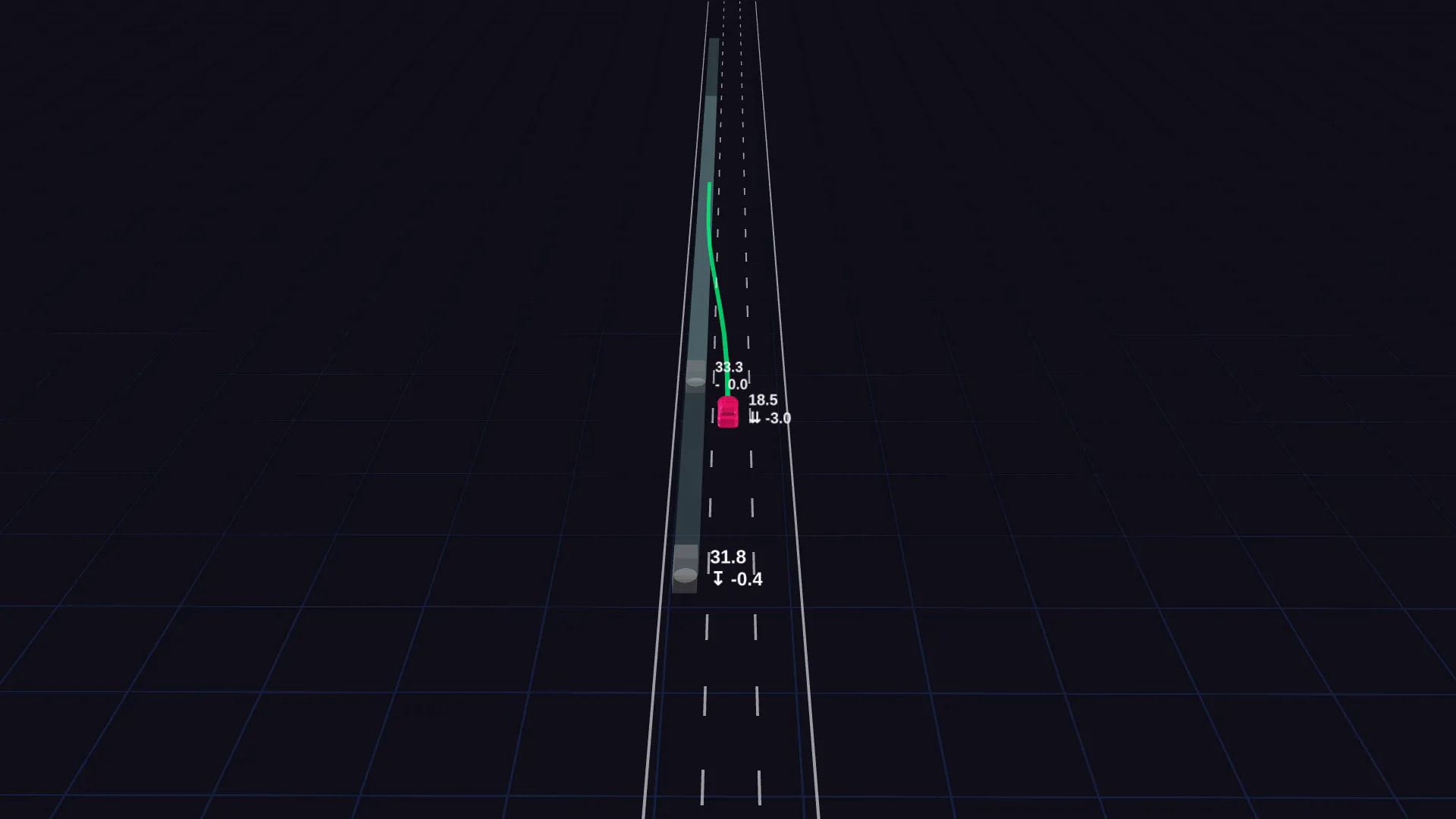}
  \end{subfigure}
  \begin{subfigure}{0.32\textwidth}\
    \centering
    \includegraphics[width=\linewidth,trim={24cm 10cm 24cm 10cm},clip]{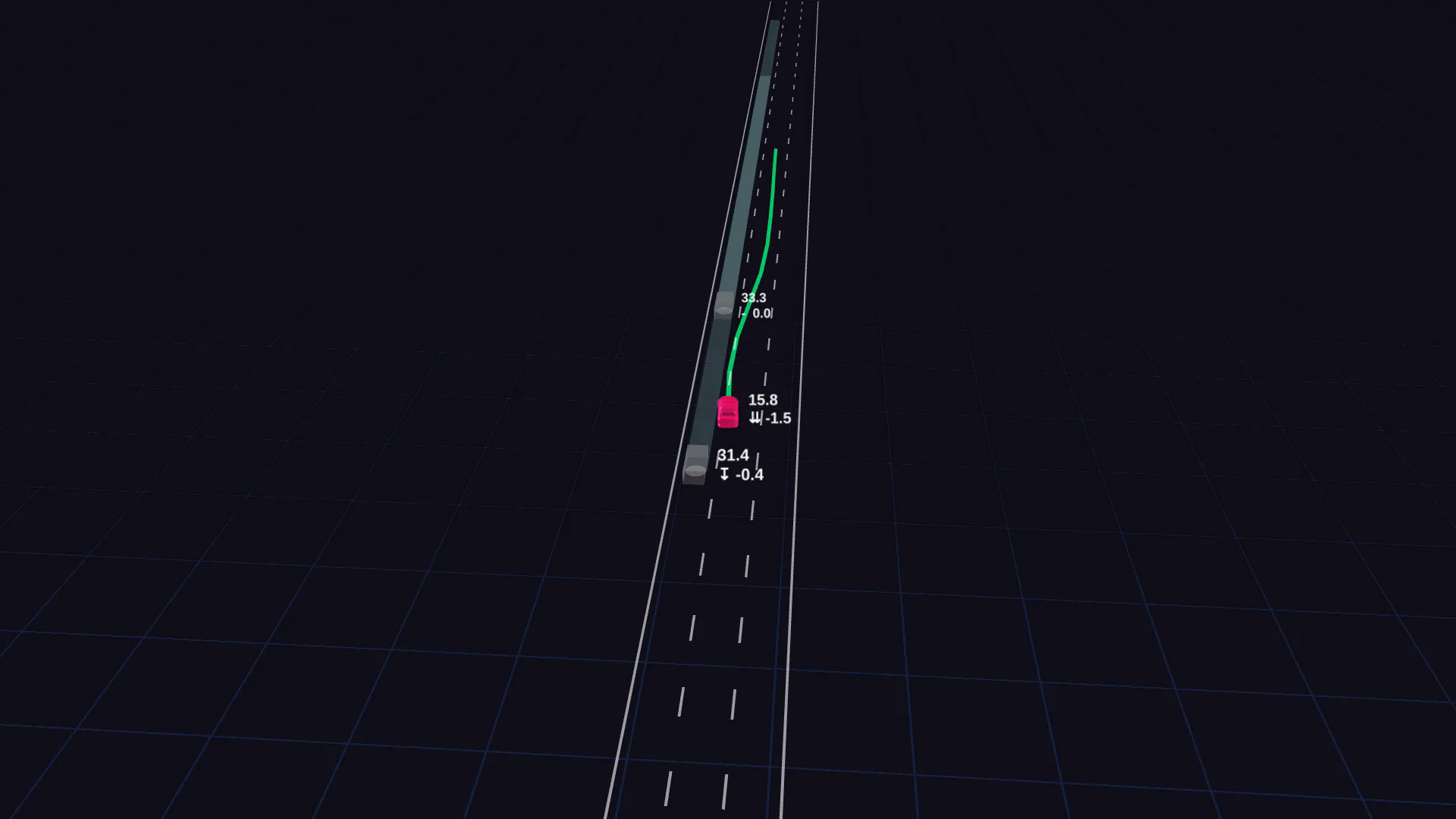}
  \end{subfigure}
  \begin{subfigure}{0.32\textwidth}
    \centering
    \includegraphics[width=\linewidth,trim={24cm 10cm 24cm 10cm},clip]{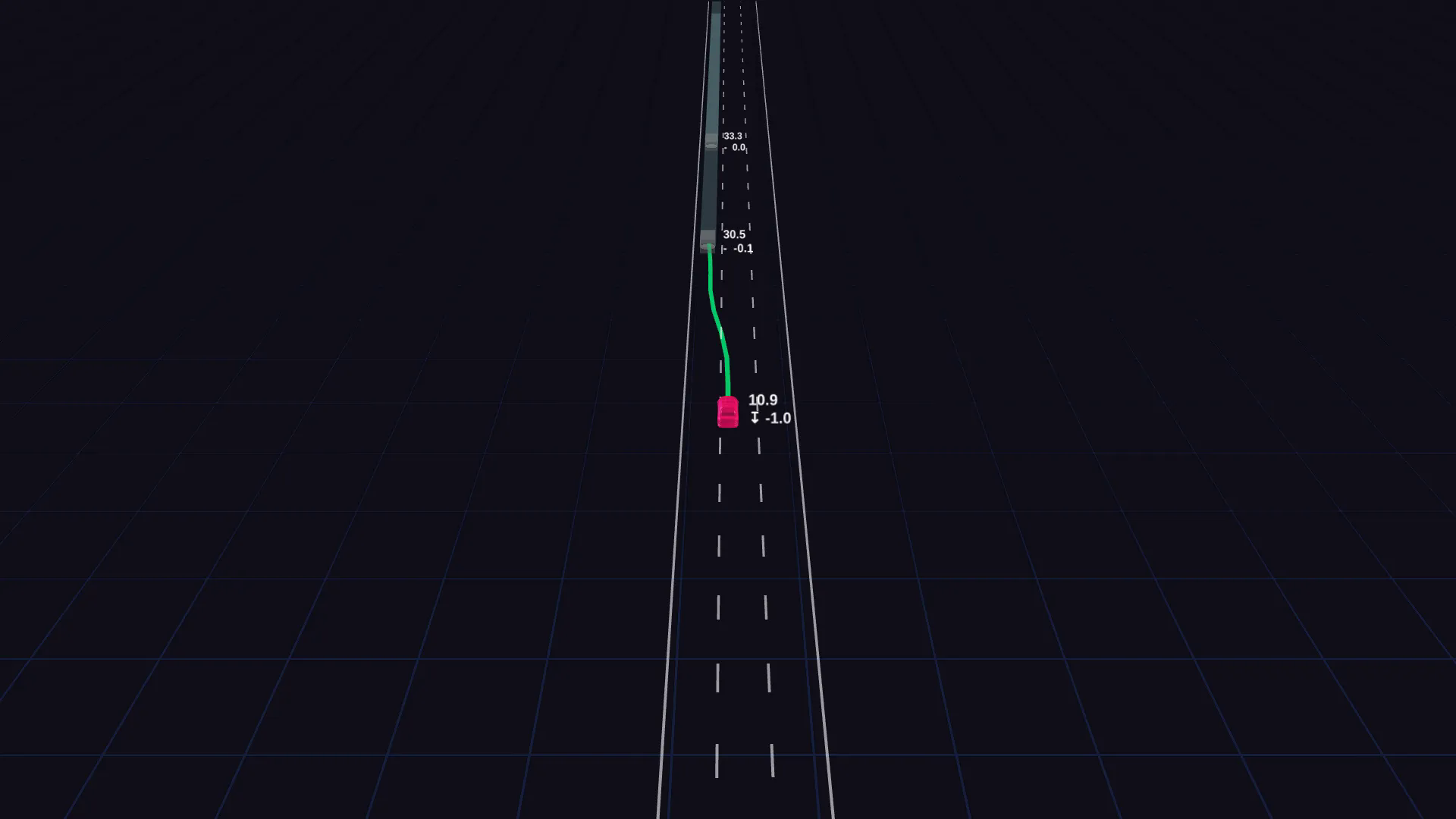}
  \end{subfigure}
  \caption{Here we see a failure case of our model. The agent makes a bad
    decision to initiate a lane change before making a last minute adjustment to
    avoid collision. While collision is avoided, this is still unsafe behavior. }
  \label{fig:lane-change-fail-ex}
\end{figure}

\end{document}